\def\eqref#1{Eq.~(\ref{#1})}
\def\1{\bm{1}}
\def\vone{{\bm{1}}}
\def\vs{{\bm{s}}}
\DeclareMathAlphabet{\mathsfit}{\encodingdefault}{\sfdefault}{m}{sl}
\SetMathAlphabet{\mathsfit}{bold}{\encodingdefault}{\sfdefault}{bx}{n}
\def\gA{{\mathcal{A}}}
\def\gC{{\mathcal{C}}}
\def\gL{{\mathcal{L}}}
\def\gP{{\mathcal{P}}}
\def\gX{{\mathcal{X}}}
\def\gY{{\mathcal{Y}}}
\def\sR{{\mathbb{R}}}
\newcommand{\E}{\mathbb{E}}
\DeclareMathOperator*{\argmax}{arg\,max}
\DeclareMathOperator*{\argmin}{arg\,min}
\def\ie{\textit{i.e.,~}}
\def\eg{\textit{e.g.,~}}
\def\etc{\textit{etc.~}}
\def\wrt{\textit{w.r.t.~}}
\def\vs{\textit{v.s.~}}
\def\aka{\textit{a.k.a.~}}
\newtheorem{theorem}{Theorem}
\newtheorem{lemma}{Lemma}
\newtheorem{definition}{Definition}
\newtheorem{assumption}{Assumption}
\newcommand{\revise}[1]{{\color{black}#1}}
\title{Non-negative Contrastive Learning
}
\author{%
  Yifei Wang\textsuperscript{1$*$} \qquad Qi Zhang\textsuperscript{2}\thanks{Equal Contribution. Yifei Wang has graduated from Peking University, and is currently a postdoc at MIT.} \qquad Yaoyu Guo\textsuperscript{1} \qquad 
   Yisen Wang\textsuperscript{2, 3}\thanks{Corresponding Author: Yisen Wang (yisen.wang@pku.edu.cn).}\quad \\ 
\textsuperscript{1} School of Mathematical Sciences, Peking University\\  
\textsuperscript{2} National Key Lab of General Artificial Intelligence,\\ \ \, School of Intelligence Science and Technology, Peking University\\
\textsuperscript{3} Institute for Artificial Intelligence, Peking University\\
}
\begin{document}

\maketitle
\begin{abstract}    
Deep representations have shown promising performance when transferred to downstream tasks in a black-box manner. Yet, their inherent lack of interpretability remains a significant challenge, as these features are often opaque to human understanding. In this paper, we propose Non-negative Contrastive Learning (NCL), a renaissance of Non-negative Matrix Factorization (NMF) aimed at deriving interpretable features. The power of NCL lies in its enforcement of non-negativity constraints on features, reminiscent of NMF's capability to extract features that align closely with sample clusters. NCL not only aligns mathematically well with an NMF objective but also preserves NMF's interpretability attributes, resulting in a more sparse and disentangled representation compared to standard contrastive learning (CL). Theoretically, we establish guarantees on the identifiability and downstream generalization of NCL. Empirically, we show that these advantages enable NCL to outperform CL significantly on feature disentanglement, feature selection, as well as downstream classification tasks. At last, we show that NCL can be easily extended to other learning scenarios and benefit supervised learning as well. Code is available at \url{https://github.com/PKU-ML/non_neg}.

\end{abstract}

\begin{figure}[h]
    \centering
    \begin{subfigure}{.24\textwidth}
     \includegraphics[width=\linewidth]{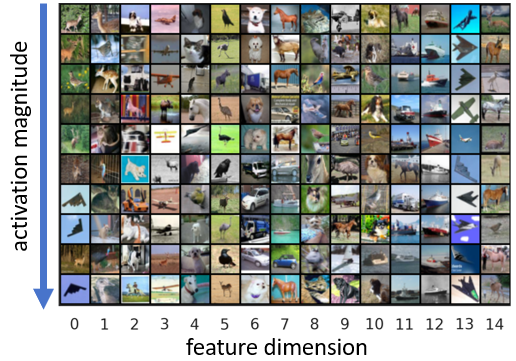}   
     \caption{CL}
     \label{fig:sample-cl}     
    \end{subfigure}
    \begin{subfigure}{.24\textwidth}
     \includegraphics[width=\linewidth]{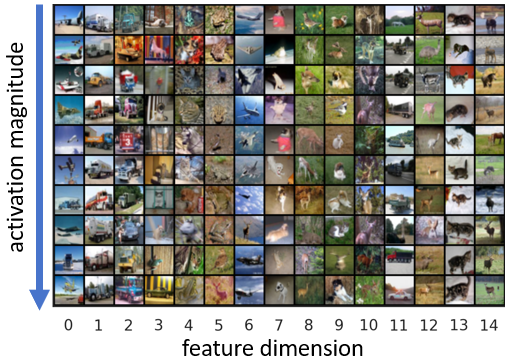}   
     \caption{NCL}
     \label{fig:sample-ncl}     
    \end{subfigure}
    \begin{subfigure}{.24\textwidth}
     \includegraphics[width=\linewidth]{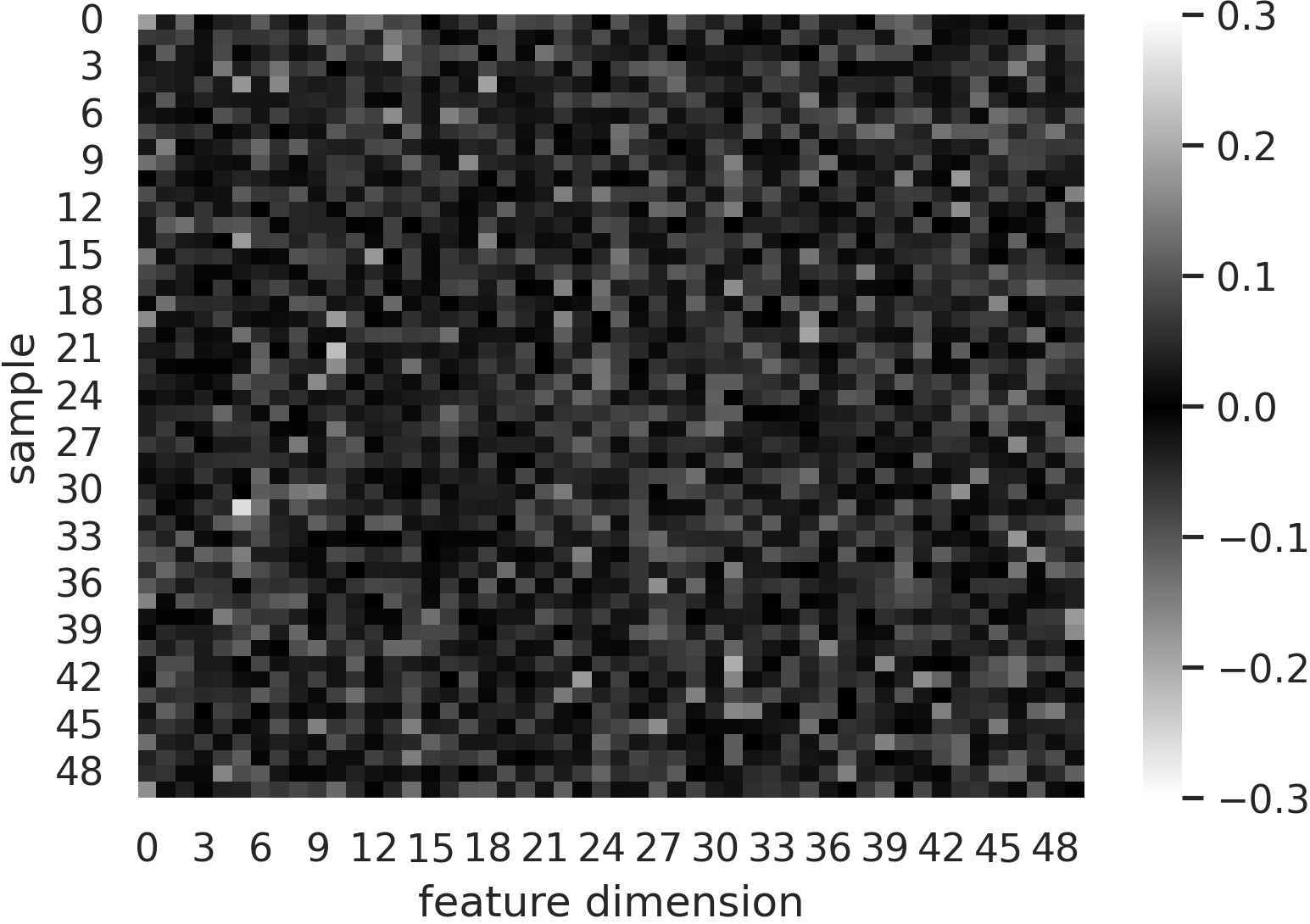}   
     \caption{CL}
     \label{fig:features-cl}
    \end{subfigure}
    \begin{subfigure}{.24\textwidth}
     \includegraphics[width=\linewidth]{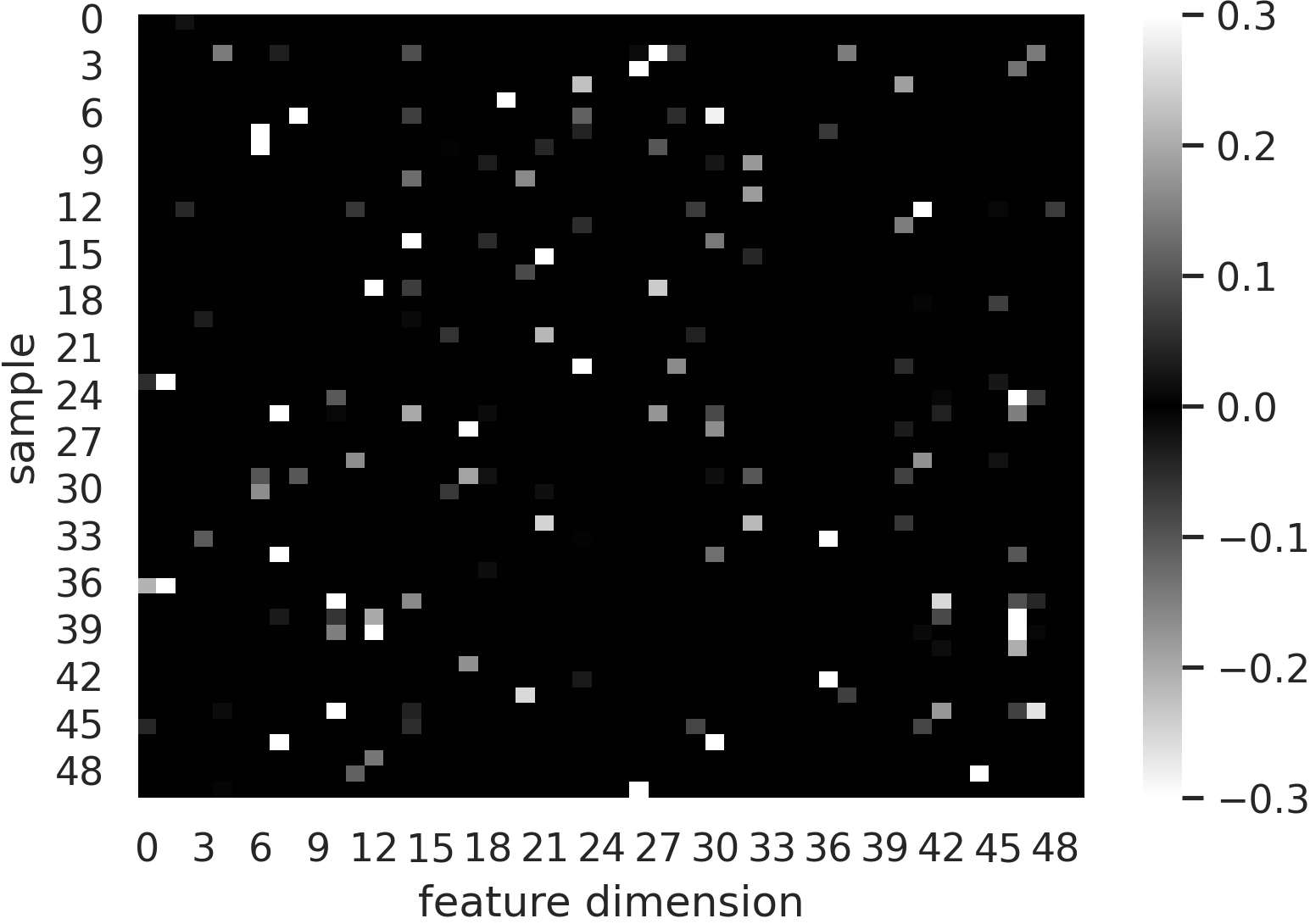}   
     \caption{NCL}
     \label{fig:features-ncl}
    \end{subfigure}    
    \caption{Feature visualization on semantic consistency (a-b) and sparsity (c-d) on CIFAR-10. The first two demonstrate top-activated samples along each feature dimension, where those of CL (a) often have distinct semantics along each dimension (column)  (\eg dears and airplanes), while those of NCL (b) have much better semantic consistency, indicating better feature disentanglement. Comparing (c) and (d), it is easy to see that NCL features enjoy much better sparsity than CL features with only a few activated dimensions ($<10\%$) per sample. 
    }
    \label{fig:example}
    \vspace{-.2in}
\end{figure}

\section{Introduction}

It is widely believed that the success of deep learning lies in its ability to learn meaningful representations \citep{bengio2013representation}. In recent years, contrastive learning (CL) has further initiated a huge interest in self-supervised learning (SSL) of representations and attained promising performance in various downstream tasks \citep{simclr,wang2021residual,guo2023contranorm,luo2023rethinking}. 
Nevertheless, the learned representations still lack natural interpretability. As shown in Figure \ref{fig:sample-cl}, top activated examples along each feature dimension (column) belong to quite different classes, so we can hardly inspect the meaning of each feature. Although various interpretability tools have been developed \citep{selvaraju2016gradcam}, we are wondering whether there is a way to directly learn interpretable features while maintaining the simplicity, generality, and superiority advantages of deep representation learning. 

In this paper, we develop Non-negative Contrastive Learning (NCL) as an interpretable alternative to canonical CL. Through a simple reparameterization, NCL can remarkably enhance the feature interpretability, sparsity, orthogonality, and disentanglement. As illustrated in Figure \ref{fig:sample-ncl}, each feature derived from NCL corresponds to a cluster of samples with similar semantics, offering intuitive insights for human inspection. NCL draws inspiration from the venerable Non-negative Matrix Factorization (NMF) technique. Notably, by imposing non-negativity on features, NMF is known to derive components that are more naturally interpretable than its unrestricted counterpart, Matrix Factorization (MF) \citep{lee1999learning}. Guided by this merit, we develop NCL as a non-negative counterpart for CL. NCL is indeed mathematically equivalent to the NMF objective, analogous to the equivalence between CL and MF discovered in prior work \citep{haochen,zhang2023generalization,zhang2023tricontrastive} (refer to Figure \ref{fig:relationship} for a visual summary). Built upon this connection, we theoretically justify these desirable properties of NCL in comparison to CL. In particular, we establish provable guarantees on the identifiability and downstream generalization of NCL, showing that NCL, in the ideal case, can even attain Bayes-optimal error. In practice, NCL can bring clear benefits to various tasks (feature selection, feature disentanglement, and downstream classification). At last, we show that NCL can be extended and yield better performance for supervised learning as well with the proposed Non-negative Cross Entropy (NCE) loss.
We summarize our contributions as follows:
\begin{itemize}
    \item We introduce \textbf{Non-negative Contrastive Learning (NCL)}, an evolved adaptation of NMF, to address the interpretability challenges posed by CL. Our findings demonstrate NCL's superiority in sparsity, orthogonality, and feature disentanglement. 
    \item We provide a comprehensive theoretical analysis of NCL, which not only characterizes the optimal representations and justifies these desirable properties, but also establishes identifiability and generalization guarantees for NCL.
    \item We show how the advantages of NCL in feature interpretability can be applied to improving various downstream tasks. We also discuss how to extend NCL beyond SSL with a demonstration of its benefit in supervised learning.
\end{itemize}

\begin{figure}[t]
    \centering
    \includegraphics[width=\textwidth]{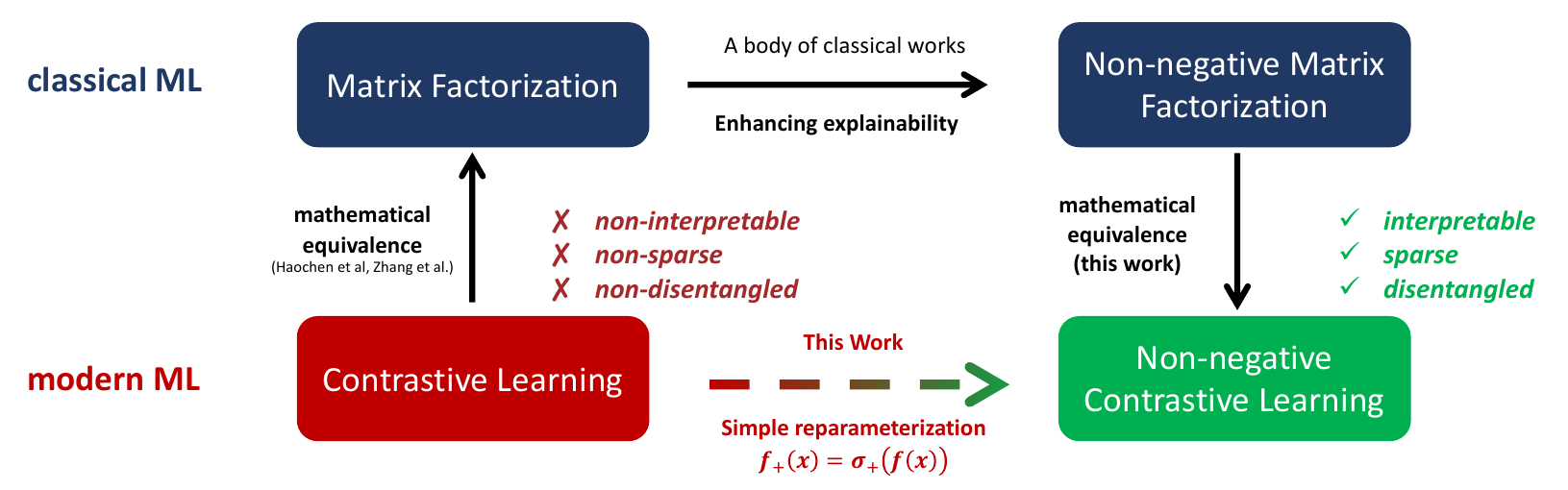}
    \caption{Relationship between different learning paradigms discussed in this work.}
    \label{fig:relationship}
\end{figure}

\section{Background on Contrastive Learning}
\label{sec:preliminary}

The task of representation learning is to learn an encoder function $f:\sR^d\to\sR^k$ that extracts low-dimensional data representations $z\in\sR^k$ (\aka features) from inputs $x\in\sR^d$. \revise{To perform contrastive learning, we first draw a pair of positive samples $(x,x_+)$ by randomly augmenting the same natural sample $\bar x\sim\gP(\bar x)$, and we denote the augmentation distribution as $\gA(\cdot|\bar x)$. We also draw negative samples $x^-$ as independently augmented samples, following the marginal distribution $\gP(x)=\E_{\bar x}\gA(x|\bar x)$. The goal of contrastive learning is to align the features of positive samples while pushing negative samples apart. }
A well-known CL objective is the InfoNCE loss \citep{InfoNCE}, 
\begin{equation}
    \gL_{\rm NCE}(f)=-\E_{x,x_+,\revise{\{x^-_i\}_{i=1}^M}}\log\frac{\exp(f(x)^\top f(x_+))}{\exp(f(x)^\top f(x_+))+\sum_{i=1}^M\exp(f(x)^\top f(x_i^-)},
    \label{eq:infonce}
\end{equation}
where $\{x_i^-\}_{i=1}^M$ are $M$ negative samples independently drawn from $\gP(x)$. \citet{haochen} propose the spectral contrastive loss (spectral loss) that is more amenable for the theoretical analysis:
\begin{equation}
    \gL_{\rm sp}(f)=-2\E_{x,x_+}f(x)^\top f(x_+)+\E_{x,x^-}(f(x)^\top f(x_i^-))^2.    
    \label{eq:spectral-loss}
\end{equation}
Besides CL, it is also used for other types of SSL \citep{zhang2022mask,zhang2023generalization}.

\textbf{Equivalence to Matrix Factorization.} Notably, \citet{haochen} show that the spectral contrastive loss (\eqref{eq:spectral-loss}) can be equivalently written as the following matrix factorization objective:
\begin{equation}
    \gL_{\rm MF}=\|\bar A-FF^\top\|^2, \text{ where } F_{x,:}=\sqrt{\gP(x)}f(x).
    \label{eq:mf}
\end{equation}
Here, $\bar A=D^{-1/2}AD^{-1/2}$ denotes the normalized version of the co-occurrence matrix $A\in\sR_+^{N\times N}$ of all augmented samples $x\in\gX$ (assuming $|\gX|=N$ for ease of exposition): $\forall x,x'\in\gX, A_{x,x'}:=\gP(x,x')=\E_{\bar x}\gA(x|\bar x)\gA(x'|\bar x)$.
More discussions on related work can be found in Appendix \ref{sec:additional-related-work}.

\subsection{Limitations in Representation Symmetry} 
\label{sec:limitations}

According to the seminal work of \cite{bengio2012unsupervised}, a good representation should extract explanatory factors that are sparse, disentangled, and with semantic meanings. However, upon close observation, CL-derived features remain predominantly non-explanatory and fall short of these criteria. For example, Figure \ref{fig:sample-cl} shows that different samples (like planes and dogs) may have similar activation along each feature dimension (a sign of non-disentanglement). Further, we also visualize the values of the learned features of SimCLR \citep{simclr} in Figure \ref{fig:features-cl}, where we can see that almost all CL features have non-zero values, indicating an absence of sparsity.

Remarkably, an important cause of CL's deficiency in these aspects is its \emph{rotation symmetry}. Notice that CL objectives are invariant \wrt the rotation group $SO(k)$, which means that for an optimal solution $f^*(x)$, any rotation matrix $R$ also gives an optimal solution $Rf^*(x)$. Consequently, CL cannot guarantee feature disentanglement along a specific axis since a non-permutative rotation can easily corrupt them.
The following theorem shows that any ({unconstrained}) CL objective depending only on pairwise distance will have rotation symmetry in its optimal solution, covering most (if not all) CL objectives, such as InfoNCE \citep{InfoNCE}, hinge loss \citep{arora}, NT-Xent \citep{simclr}, spectral contrastive loss \citep{haochen}), \etc 
\begin{theorem}
As long as the unconstrained objective $\gL$ only relies on pairwise Euclidean similarity (or distance), \eg $f(x)^\top f(x')$, its solution $f^*(x)$ suffers from rotation symmetry.
\label{theorem:ambiguity}
\end{theorem}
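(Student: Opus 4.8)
The plan is to show that, under the hypothesis, the loss $\gL$ — viewed as a functional of the feature map $f$ — factors through the table of pairwise feature similarities, and that this table is invariant under an orthogonal change of basis in $\sR^k$; consequently every minimiser comes with an entire $SO(k)$-orbit of equally good minimisers.

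\emph{Step 1: make the hypothesis precise.} "Relies only on pairwise Euclidean similarity (or distance)" means there is a functional $\Phi$ with $\gL(f)=\Phi\!\left((f(x)^\top f(x'))_{x,x'\in\gX}\right)$, or equivalently a functional $\Psi$ with $\gL(f)=\Psi\!\left((\|f(x)-f(x')\|)_{x,x'\in\gX}\right)$. The two forms are interchangeable once the self-similarities $\|f(x)\|^2=f(x)^\top f(x)$ are included among the recorded quantities (take $x'=x$), via polarization $f(x)^\top f(x')=\tfrac12(\|f(x)\|^2+\|f(x')\|^2-\|f(x)-f(x')\|^2)$. I would relegate to a remark the routine check that InfoNCE, NT-Xent, the hinge loss, and the spectral loss all fit this template, each being built solely from inner products $f(x)^\top f(x_+)$ and $f(x)^\top f(x_i^-)$.

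\emph{Step 2: the invariance computation.} Fix any $R\in SO(k)$ (indeed any $R\in O(k)$) and set $f_R:=Rf$. Since $R^\top R=I$, for every pair $x,x'$ we have $f_R(x)^\top f_R(x')=f(x)^\top R^\top R f(x')=f(x)^\top f(x')$, and $\|f_R(x)-f_R(x')\|=\|R(f(x)-f(x'))\|=\|f(x)-f(x')\|$ because $R$ is an isometry. Hence the whole table of pairwise similarities (equivalently distances) is unchanged, so $\gL(Rf)=\gL(f)$ for all $R$. Applying this to a minimiser $f^*$ gives $\gL(Rf^*)=\gL(f^*)=\min_f \gL(f)$, so $Rf^*$ is again optimal for every $R\in SO(k)$. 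This is exactly the asserted rotation symmetry: the optimal set is closed under $f\mapsto Rf$, and as soon as $k\ge2$ and $f^*\not\equiv 0$ one has $Rf^*\ne f^*$ for generic $R$, so the optimum is non-unique and in fact forms a positive-dimensional orbit of rotated solutions.

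\textbf{Main obstacle.} The algebra in Step 2 is a one-line computation; the delicate point is Step 1 — phrasing the hypothesis at the right generality so that the conclusion is both true and nontrivial. The loss must be allowed to depend on the pairwise similarities in an arbitrary (even discontinuous) way, yet \emph{only} through them, and one must notice that including the self-norms $\|f(x)\|^2$ is precisely what reconciles the "similarity" and "distance" phrasings. I would also add a sentence emphasising that the argument uses nothing about optimality beyond the preservation of the value of $\gL$, so rotation symmetry is a property of the entire loss landscape, not merely of its minimisers, and that working with $SO(k)$ rather than $O(k)$ loses nothing since $SO(k)$ already acts non-trivially for $k\ge2$.
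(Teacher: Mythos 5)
Your Step 2 is exactly the paper's proof — the one-line observation that $(Rf^*(x))^\top Rf^*(x') = f^*(x)^\top R^\top R f^*(x') = f^*(x)^\top f^*(x')$ — and your Step 1 is just a careful unpacking of the hypothesis that the paper leaves implicit. Same approach, correct.
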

\begin{proof}
    Observe that $\forall\ x,x'\in\gX$, $\left(Rf^*(x)\right)^\top Rf^*(x)=f^*(x)^\top R^\top R f^*(x')=f^*(x)^\top f^*(x')$.
\end{proof}
To attain feature disentanglement, we need to break this representation symmetry and learn representations aligned with the given axes.

\section{Non-negative Contrastive Learning}
\label{sec:ncl}

In classical machine learning, an effective way to make matrix factorization (MF) features interpretable is through non-negative matrix factorization (NMF), where a non-negative matrix $V\geq 0$ is factorized into two \emph{non-negative} features $V\approx WH$ where $W,H\geq0$ \footnote{Throughout this paper, $\geq$ denotes element-wise comparison.}. Inspired by the equivalence between MF and CL (\eqref{eq:mf}), we propose a modern variant for NMF (also with mathematical equivalence) and show that it also demonstrates good feature interpretability as NMF.

In the equivalent MF objective of CL, a critical observation is that the normalized co-occurrence matrix $\bar A\in\sR_+^{N\times N}$ (\eqref{eq:mf}) is a \textbf{non-negative} symmetric matrix (all values in $\bar{A}$ are probability values). Therefore, we can apply symmetric NMF to $\bar A$ into non-negative features $F_+$:
\begin{equation}
    \gL_{\rm NMF}(F)=\|\bar A-F_+F_+^\top\|^2, \text{ where } F_+\geq0.
    \label{eq:nmf}
\end{equation}
Unfornaturely, \eqref{eq:nmf} is intractable with canonical NMF algorithms since the input space $\gX$ is exponentially large ($N\to\infty$). To get rid of this obscure, we reformalize the NMF problem (\eqref{eq:nmf}) equivalently as a tractable CL objective (\eqref{eq:ncl}) that only requires sampling positive/negative samples from the joint/marginal distribution. The following theorem guarantees their equivalence.
\begin{theorem}
The non-negative matrix factorization problem in \eqref{eq:nmf} is equivalent to the following non-negative spectral contrastive loss when the row vector $(F_+)_{x,:}=\sqrt{\gP(x)}f_+(x)^\top$,
\begin{equation}
\begin{gathered}
    \gL_{\rm NCL}=-2\E_{x,x_+}f_+(x)^\top f_+(x_+)+\E_{x,x^-}\left( f_+(x)^\top f_+(x^-)\right)^2, \\
    \text{such that }f_+(x)\geq0, \forall x\in\gX.
\end{gathered}
\label{eq:ncl}
\end{equation}
\label{thm:equivalance between ncl and nmf}
\end{theorem}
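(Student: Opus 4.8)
The plan is to mirror HaoChen et al.'s derivation that establishes the equivalence between the spectral contrastive loss and the matrix factorization objective (\eqref{eq:mf}), checking that the non-negativity constraints on both sides correspond under the same reparameterization $(F_+)_{x,:}=\sqrt{\gP(x)}f_+(x)^\top$. The key point is that this substitution is a bijection between non-negative functions $f_+:\gX\to\sR_+^k$ and non-negative matrices $F_+\in\sR_+^{N\times k}$ (since $\sqrt{\gP(x)}>0$ for every $x\in\gX$, scaling row-wise by a positive constant preserves the sign constraint exactly), so it suffices to show the two \emph{objectives} agree up to an additive constant independent of $f_+$.

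First I would expand the Frobenius norm in \eqref{eq:nmf}: $\|\bar A-F_+F_+^\top\|^2 = \|\bar A\|^2 - 2\Tr(\bar A F_+F_+^\top) + \|F_+F_+^\top\|^2$. The first term $\|\bar A\|^2$ is a constant and can be dropped. For the cross term, I would substitute $\bar A = D^{-1/2}AD^{-1/2}$ with $A_{x,x'}=\gP(x,x')$ and $D_{x,x}=\gP(x)$, together with the row reparameterization, to get
\begin{equation}
\Tr(\bar A F_+F_+^\top) = \sum_{x,x'}\bar A_{x,x'}(F_+)_{x,:}(F_+)_{x',:}^\top = \sum_{x,x'}\gP(x,x')\,f_+(x)^\top f_+(x') = \E_{x,x_+}f_+(x)^\top f_+(x_+),
\end{equation}
using that $\gP(x,x')/\sqrt{\gP(x)\gP(x')}\cdot\sqrt{\gP(x)}\sqrt{\gP(x')}=\gP(x,x')$ and recognizing the sum over the joint distribution as the positive-pair expectation. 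For the quadratic term, $\|F_+F_+^\top\|^2 = \sum_{x,x'}\big((F_+)_{x,:}(F_+)_{x',:}^\top\big)^2 = \sum_{x,x'}\gP(x)\gP(x')\big(f_+(x)^\top f_+(x')\big)^2 = \E_{x,x^-}\big(f_+(x)^\top f_+(x^-)\big)^2$, since $x$ and $x^-$ are independent draws from the marginal $\gP$. Combining, $\gL_{\rm NMF}(F) = \|\bar A\|^2 + \gL_{\rm NCL}(f_+)$, so the two objectives have the same minimizers over the corresponding constraint sets, and the constraint $F_+\geq 0$ is equivalent to $f_+(x)\geq 0$ for all $x$.

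I do not expect a serious obstacle here: the computation is essentially the one already in the literature for the unconstrained case, and the only new ingredient — tracking the non-negativity constraint through the reparameterization — is immediate because row-scaling by positive weights is sign-preserving. The one place to be slightly careful is the finite-support assumption $|\gX|=N<\infty$ used to write everything as finite matrix sums; as in the MF case this is stated for ease of exposition and the expectation form of $\gL_{\rm NCL}$ in \eqref{eq:ncl} is what remains meaningful (and tractable) in the general case, so I would simply note that the matrix identities above are interpreted as the corresponding expectations when $\gX$ is infinite.
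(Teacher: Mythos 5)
Your proof is correct and follows essentially the same route as the paper's: both expand the Frobenius norm $\|\bar A - F_+F_+^\top\|^2$ into a constant, a cross term, and a quadratic term, and identify the latter two with the positive-pair and negative-pair expectations via the substitution $(F_+)_{x,:}=\sqrt{\gP(x)}f_+(x)^\top$ (you via the trace identity, the paper via entrywise expansion — cosmetically different but the same computation). Your extra remark that positive row-scaling gives a bijection between the constraint sets $\{F_+\geq 0\}$ and $\{f_+(\cdot)\geq 0\}$ is a helpful detail the paper leaves implicit.
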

\vspace{-0.2in}
\textbf{Non-negative Reparameterization.}
Comparing \eqref{eq:ncl} to the original spectral contrastive loss in \eqref{eq:spectral-loss}, we find that the objectives remain the same, while the only difference is that NCL enforces non-negativity on output features $f_+(\cdot)$. We can realize this constraint via a reparameterization of neural network outputs, \eg applying a non-negative transformation $\sigma_+(\cdot)$ (satisfying $\sigma_+(x)\geq0,\forall x$) at the end of a standard neural network encoder $f$ used in CL, 
\begin{equation}
 f_+(x)=\sigma_+(f(x)).   
\end{equation}
Potential choices of $\sigma_+$ include $\operatorname{ReLU}, \operatorname{sigmoid}, \operatorname{softplus}$, \etc We find that the simple $\operatorname{ReLU}$ function, $\operatorname{ReLU}(x)=\max(x,0)$, works well in practice. Its zero-out effect can induce good feature sparsity compared to other operators with strictly positive outputs.
Although our discussions mainly adopt the spectral contrastive loss, this reparameterization trick also applies to other CL objectives (such as the InfoNCE loss in \eqref{eq:infonce}) and transforms them to the corresponding non-negative versions. We regard these variants generally as Non-negative Contrastive Learning (NCL). We also briefly discuss how to extend it to other learning paradigms (supervised, multi-modal) in Section \ref{sec:extension}.

\textbf{Resurrecting the Dead Neurons.}
We notice that although ReLU output features enjoy better sparisty, they may suffer from the dead neuron problem during training \citep{lu2019dying}. In practice, some dimensions of NCL features (\eg 86/512) are inactive on all samples, sacrificing the given model capacity. To avoid this problem while ensuring non-negativity, inspired by Gumbel softmax \citep{jang2016categorical}, we adopt a simple (biased) reparameterization trick (in PyTorch style), 
\begin{equation}
\texttt{\small z = F.relu(z).detach() + F.gelu(z) - F.gelu(z).detach()},    
\end{equation}
where \texttt{z.detach()} eliminates the gradients of \texttt{z}. In this way, the forward output still equals to \texttt{F.relu(z)} while the gradient is calculated \wrt GELU \citep{hendrycks2016gaussian}, which is close to ReLU but has gradients for negative inputs. This reparameterization trick works on CIFAR-10 and CIFAR-100 and improves the overall feature diversity, but is inferior to ReLU on ImageNet-100. More systematic study of this problem is left for future work.

\begin{figure}[!t]
    \centering
    \begin{subfigure}{0.25\textwidth}
    \includegraphics[width=\linewidth]{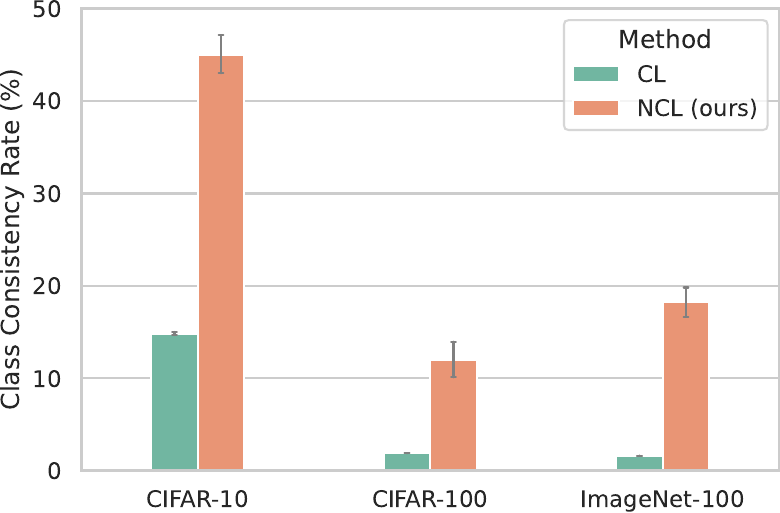}
    \caption{Semantic Consistency}
    \label{fig:dimensional-clustering}
    \end{subfigure}
    \hspace{0.05in}
    \begin{subfigure}{0.25\textwidth}
    \includegraphics[width=\linewidth]{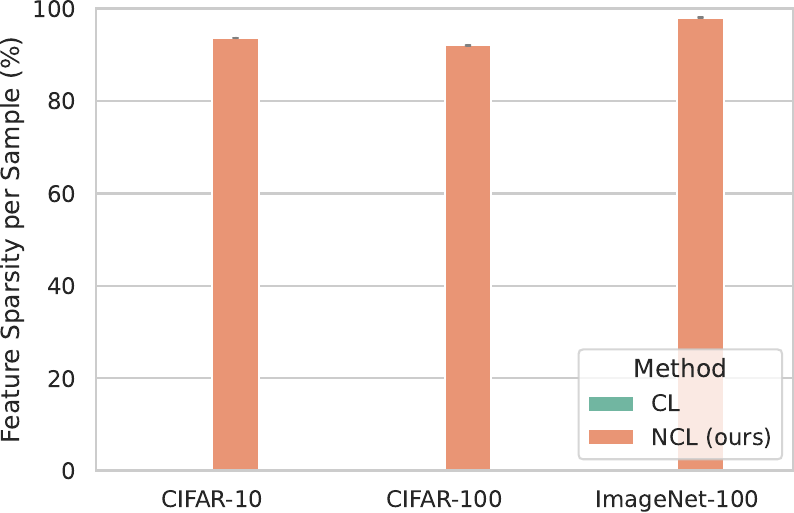}
    \caption{Feature Sparsity}
    \label{fig:sample-sparisty}
    \end{subfigure}
    \hspace{0.05in}
    \begin{subfigure}{0.45\textwidth}
    \includegraphics[width=\linewidth]{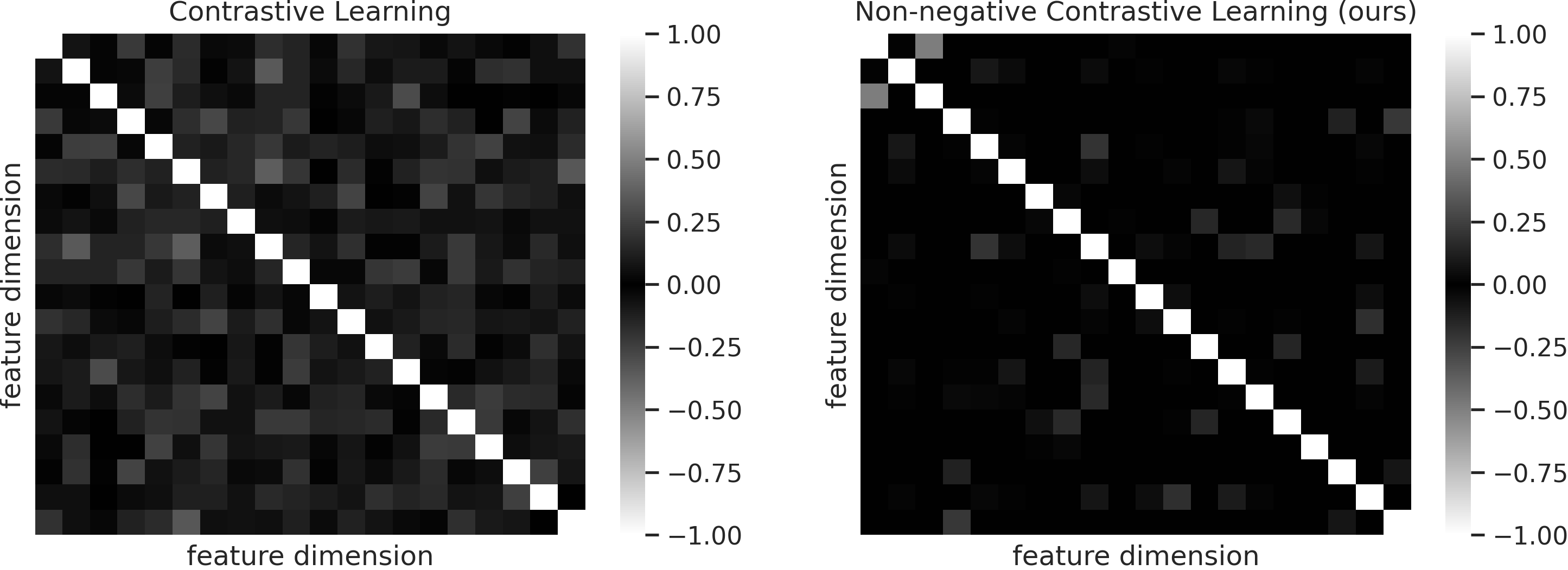}
    \caption{Feature Correlation}
    \label{fig:dimensional-correlation}
    \end{subfigure}
    \caption{Comaprisons between contrastive learning (CL) and non-negative contrastive learning (NCL): a) class consistency rate, measuring the proportion of activated samples that belong to their most frequent class along each feature dimension; b) feature sparsity, the average proportion of zero elements ($|x|<1e^{-5}$) in the features of each test sample; c) dimensional correlation matrix $C$  of 20 random features: $\forall (i,j),C_{ij}=\E_x\tilde{f}_i(x)^\top\tilde{f}_j(x)$, where $\tilde f_i(x)=f_i(x)/\sqrt{\sum_x \left(f_i(x)\right)^2}$. 
    }
    \label{fig:interpretability-comparison}
\end{figure}
\subsection{Benefits of Non-negativity: Consistency, Sparsity, and Orthogonality}
\label{sec:adv of NCL features}
In canonical NMF, researchers found that non-negative constraints help discover interpretable parts of facial images \citep{lee1999learning}. It also has applications in many scenarios for discovering meaningful features \citep{nmf2012review}, such as text mining, factor analysis, and speech denoising. In our scenario, the non-negativity inherent to NCL offers clear advantages in deriving interpretable representations. Unlike canonical CL, non-negativity ensures that different features will not cancel one another out when calculating feature similarity. As NCL drives dissimilar negative samples to have minimal similarity $(f_+(x)^\top f_+(x')\to0)$, each sample is forced to have zero activation on features it does not possess, rendering NCL features inherently sparse. 

Furthermore, NCL offers a robust solution to the rotation symmetry challenge associated with standard CL, as outlined in Theorem \ref{theorem:ambiguity}. Any rotation applied to features, especially sparse ones, often disrupts their non-negative nature. To exemplify, consider the feature matrix
$I=\begin{pmatrix}
    1 & 0 \\
    0 &1 \\
\end{pmatrix}.$
This matrix cannot undergo any non-permutative rotation while preserving its non-negativity. As a consequence, NCL features are enforced to be aligned with the given axes. In particular, with non-negative constraints, activated samples along each feature dimension have high similarity and thus have similar semantics.
 
To verify the analysis above, we conduct a pilot study on SimCLR and its NCL variant on benchmark datasets. {First}, we find that NCL features indeed have better semantic consistency than CL (Figure \ref{fig:dimensional-clustering}), aligning with the examples in Figure \ref{fig:example}. {Second}, around 90\% dimensions of NCL features are zero \emph{per sample}, while CL features do not have any sparsity, as we observed in Figure \ref{fig:sample-sparisty}. {Third}, Figure \ref{fig:dimensional-correlation} shows the feature correlation matrices, and we can observe that NCL features are also more orthogonal than CL features. Overall, NCL enjoys much better semantic consistency, feature sparsity, and feature orthogonality than CL.

\section{Theoretical Properties of Non-negative Contrastive Learning}
\label{sec:theory}

In Section \ref{sec:ncl}, we have proposed non-negative contrastive learning (NCL) and shown that a simple reparameterization can bring significantly better semantic consistency, sparsity, and orthogonality. In this section, we provide comprehensive theoretical analyses of non-negative contrastive learning in terms of its optimal representations, feature identifiability, and downstream generalization.

\subsection{Assumptions}
\label{sec:framework}

As in \citet{arora}, we assume that in the pretraining data, samples belong to  $m$ \emph{latent classes} $\gC=\{c_1,\dots,c_m\}$ 
with non-neglectable support $\gP(c)>0,\forall c\in\gC$. We assume that positive samples generated by contrastive learning are drawn independently from the same latent class.
\begin{assumption}[Positive Generation]
    $\forall x,x'\in\gX, \gP(x,x')=\E_{c}\gP(x|c)\gP(x'|c)$.
    \label{assumption:independence}
\end{assumption}

\textbf{Remark.} Some previous works, \eg  \cite{wang2022chaos}, criticized this assumption to be non-realistic. But this is mainly because \citet{arora} only consider the extreme case when the latent classes are the same as the observed labels, \ie $\gC=\gY$; while in fact, positive samples generated by data augmentations are still dependent within the same observed class. In fact, when we define latent classes using more fine-grained categorization (\eg a specific type of car), this assumption is still plausible. In the extreme case, each original sample $\bar x\in\bar{\gX}$ can be seen as a latent class, which reduces to the general definition in \citet{haochen} with $\gP(x,x')=\E_{\bar x}\gP(x|\bar x)\gP(x'|\bar x)$. Thus, our framework includes the two previous frameworks as special cases.

As a common property of natural data, we assume different latent classes only have small overlap.
\begin{assumption}[Latent Class Overlap]
The maximal class overlap probability is no more than $\varepsilon$, \ie $\max_{i\neq j}\gP(c_i,c_j)\leq\varepsilon$,
where $\gP(c_i,c_j)=\E_x \gP(c_i|x)\gP(c_j|x)$ denotes the overlap probability between $c_i$ and $c_j$. If each sample only belongs to one latent class, we have $\varepsilon=0$.
\label{assumption:overlap}
\end{assumption}
The latent class information of an example $x$ can be denoted as an $m$-dimensional vector $\psi(x)=[\gP(c_1|x),\dots,\gP(c_k|x)]$. This vector captures the subclass-level semantics and can be seen as an ideal representation of $x$. Next, we show that non-negative contrastive learning can recover this ground-truth feature from observed samples, up to trivial dimensional scaling and permutation.

\subsection{Optimal Representations}
\label{sec:optimal}
The following theorem states that when the number of feature dimensions $k$ is as large as the number of latent classes $m$, the following solution $\phi(x)$ is an optimal solution to the NCL objective:
\begin{equation}
\phi(x)=\left[\frac{1}{\sqrt{\gP(\pi_1)}}\gP({\pi_1|x}),\dots,\frac{1}{\sqrt{\gP(\pi_m)}}\gP({\pi_m}|x)\right]\in\sR_+^m, \forall\ x\in\gX,
\label{eq:optimal-ncl}
\end{equation}
where $[\pi_1,\dots,\pi_m]$ is a random permutation of latent classes $[c_1,\dots,c_m]$.\footnote{\revise{For the ease of exposure, we consider $k=m$. If $k>m$, we can fill the rest of dimensions with all zeros, while if $k<m$, we can always tune $k$ to attain a lower loss. In practice, some feature dimensions are never activated, so we believe that generally we have $k>m$ and omit inactive dimensions for simplicity. Besides, although the discrete latent class is adopted here, the discovered optimal solutions also apply to continuous latent variables, as long as the position generation still follows Assumption \ref{assumption:independence}.}} Nicely, $\phi(x)$ admits natural interpretability since it contains the posterior distribution along each dimension.
\begin{theorem}
Under the latent class assumption (Assumption \ref{assumption:independence}) and choosing $k=m$, $\phi(\cdot)$ is a minimizer of the NCL objective \eqref{eq:ncl}, \ie $\phi\in\argmin \gL_{\rm NCL}$.
\label{theorem:optimal solutions}
\end{theorem}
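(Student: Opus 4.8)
The plan is to invoke Theorem~\ref{thm:equivalance between ncl and nmf} to pass from $\gL_{\rm NCL}$ to the symmetric NMF objective $\gL_{\rm NMF}(F_+)=\|\bar A-F_+F_+^\top\|^2$, and then simply verify that the candidate $\phi$ attains the trivial lower bound $\gL_{\rm NMF}=0$. Since $\gL_{\rm NCL}$ differs from $\gL_{\rm NMF}$ only by the additive constant $-\|\bar A\|^2$ (exactly as in the CL/MF equivalence of \citet{haochen}), and since $\gL_{\rm NMF}\geq 0$ for every feasible factorization, it suffices to exhibit one non-negative $F_+\in\sR_+^{N\times m}$ with $F_+F_+^\top=\bar A$. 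The matrix associated with $f_+=\phi$ under the correspondence $(F_+)_{x,:}=\sqrt{\gP(x)}\,\phi(x)^\top$ is the natural candidate, and it automatically has the right number of columns ($k=m$).

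Non-negativity of $F_+$ is immediate, as each entry of $\phi(x)$ is a posterior probability $\gP(\pi_j|x)\geq 0$ rescaled by $\gP(\pi_j)^{-1/2}>0$. For the identity $F_+F_+^\top=\bar A$, I would compute the $(x,x')$ entry and discard the (harmless) permutation $\pi$:
\[
(F_+F_+^\top)_{x,x'}=\sqrt{\gP(x)\gP(x')}\sum_{c\in\gC}\frac{\gP(c|x)\,\gP(c|x')}{\gP(c)}.
\]
Applying Bayes' rule $\gP(c|x)=\gP(x|c)\gP(c)/\gP(x)$ turns each summand into $\gP(x|c)\gP(x'|c)\gP(c)/(\gP(x)\gP(x'))$, so the sum equals $\bigl(\sum_{c}\gP(c)\gP(x|c)\gP(x'|c)\bigr)/(\gP(x)\gP(x'))$. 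By Assumption~\ref{assumption:independence} the numerator is exactly $\gP(x,x')=A_{x,x'}$, and $\gP(x)$ is precisely the degree $D_{xx}=\sum_{x'}A_{x,x'}$, so $(F_+F_+^\top)_{x,x'}=A_{x,x'}/\sqrt{D_{xx}D_{x'x'}}=\bar A_{x,x'}$. Hence $\gL_{\rm NMF}(F_+)=0$, and by Theorem~\ref{thm:equivalance between ncl and nmf}, $\phi\in\argmin\gL_{\rm NCL}$.

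I do not expect a real obstacle: under Assumption~\ref{assumption:independence} the normalized co-occurrence matrix genuinely admits the rank-$m$ non-negative factorization encoded by $\phi$, so the minimal NCL loss is attained with value $-\|\bar A\|^2$ and the whole argument collapses to the Bayes-rule computation above. The only points requiring mild care are: (i) invoking the equivalence of Theorem~\ref{thm:equivalance between ncl and nmf} in the right direction (a feasible $f_+\geq 0$ yields a feasible $F_+\geq 0$ and conversely), and (ii) the bookkeeping around $k=m$ versus $k>m$, which is already handled by padding surplus coordinates with zeros as in the footnote to the theorem statement.
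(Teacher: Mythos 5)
Your proof is correct and takes essentially the same route as the paper's: invoke the NCL/NMF equivalence of Theorem~\ref{thm:equivalance between ncl and nmf}, then show via Bayes' rule and Assumption~\ref{assumption:independence} that the $F_+$ built from $\phi$ satisfies $F_+F_+^\top=\bar A$, so $\gL_{\rm NMF}(F_+)=0$ is attained at its trivial lower bound. The only cosmetic difference is that you phrase the computation as verifying $F_+F_+^\top=\bar A$ entrywise (and explicitly note feasibility, $F_+\geq 0$), whereas the paper expands $\|\bar A - F_+F_+^\top\|^2$ directly and shows each term in the sum vanishes --- the same Bayes-rule manipulation either way.
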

Compared to CL with non-explanatory eigenvectors of $\bar{A}$ as the closed-form solution \citep{haochen}, NCL's optimal features $\phi(x)$ are advantages in the following aspects. \textbf{1) Semantic Consistency.} For each $c\in\gC$, the activated samples with $\phi_c(x)>0$ indicates that $\gP(c|x)>0$, meaning that $x$ indeed belongs to $c$ to some extent. Thus, activated samples along each dimension will have similar semantics. Instead, CL's eigenvector features containing both positive and negative values do not have a clear semantic interpretation. \textbf{2) Sparsity.} As each sample belongs to one or a few subclasses (Assumption \ref{assumption:overlap}), only one or a few elements in $\phi(x)$ are positive with $\gP(c|x)>0$. Thus, $\phi(x)$ is sparse. In comparison, eigenvectors used by CL are usually dense vectors. \textbf{3) Orthogonality.} Since feature dimensions in $\phi(x)$ indicate class assignments, that different feature dimensions have very low correlation when the latent class overlap $\varepsilon$ is small (Assumption \ref{assumption:overlap}):
\begin{equation}
    \forall\ i\neq j, \E_x\phi_i(x)\phi_j(x)=\frac{1}{\sqrt{\gP(\pi_i)\gP(\pi_j)}}\E_x\gP(\pi_i|x)\gP(\pi_j|x)\leq\frac{\varepsilon}{\min_c\gP(c)}.
\end{equation} 
In the ideal case when evey sample belongs to a single latent class, we can show that $\phi(x)$ perfectly recovers the one-hot ground-truth factors with high sparsity and perfect orthogonality.
\begin{theorem}[Optimal representations under one-hot latent labels]
If each sample $x$ only belongs to only one latent class $c=\mu(x)$, we have $\phi(x)=\vone_{\mu(x)}$, where $\mu(x)$ indicates $x$'s belonging class. Meanwhile, we have $\|\phi(x)\|_0=1$ (highly sparse) and $\E_x\phi(x)\phi(x)^\top=I$ (perfectly orthogonal).
\label{theorem:optimal}
\end{theorem}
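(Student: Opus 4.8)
The statement is a specialization of Theorem~\ref{theorem:optimal solutions} to the degenerate regime $\varepsilon=0$ of Assumption~\ref{assumption:overlap}, so the plan is simply to substitute the one-hot structure of the posteriors into the closed form \eqref{eq:optimal-ncl} and then read off the two quantitative claims.

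First I would record the only consequence of the one-hot hypothesis that is needed: since $\psi(x)=[\gP(c_1|x),\dots,\gP(c_m|x)]$ is a probability vector and $x$ lies in a single latent class $\mu(x)$, all of its mass sits on one coordinate, i.e.\ $\gP(c|x)=\mathbbm{1}[c=\mu(x)]$ for every $c\in\gC$. Substituting this into \eqref{eq:optimal-ncl}, every coordinate of $\phi(x)$ except the one indexed by $\mu(x)$ vanishes, and the surviving coordinate equals $\gP(\mu(x))^{-1/2}$; hence $\phi(x)=\gP(\mu(x))^{-1/2}\,\vone_{\mu(x)}$, a strictly positive rescaling of the one-hot indicator $\vone_{\mu(x)}$. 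By the ``trivial dimensional scaling'' ambiguity already attached to Theorem~\ref{theorem:optimal solutions}, this is exactly the claimed recovery of the ground-truth one-hot factor, and it is still a minimizer of $\gL_{\rm NCL}$. The sparsity claim $\|\phi(x)\|_0=1$ is then immediate, since precisely one entry is nonzero.

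For the orthogonality claim I would evaluate $\E_x[\phi(x)\phi(x)^\top]$ entrywise. For $i\neq j$, $\phi_i(x)\phi_j(x)$ is proportional to $\gP(\pi_i|x)\gP(\pi_j|x)$, which is identically zero because no sample belongs to two distinct latent classes; thus all off-diagonal entries vanish. For the $i$-th diagonal entry, $\E_x\phi_i(x)^2=\gP(\pi_i)^{-1}\,\E_x\gP(\pi_i|x)^2$, and since $\gP(\pi_i|x)\in\{0,1\}$ it is idempotent, so $\gP(\pi_i|x)^2=\gP(\pi_i|x)$ and $\E_x\gP(\pi_i|x)=\gP(\pi_i)$ by the tower rule, giving a diagonal entry of $1$. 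Therefore $\E_x[\phi(x)\phi(x)^\top]=I$.

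There is essentially no hard step here: the entire argument is a substitution followed by the observation that a $\{0,1\}$-valued random variable is idempotent. The only point deserving care is cosmetic, namely reconciling the normalization factor $\gP(\mu(x))^{-1/2}$ in \eqref{eq:optimal-ncl} with the bare one-hot vector $\vone_{\mu(x)}$ in the statement; this is absorbed by the per-coordinate scaling freedom of the optimal representation, and indeed it is precisely that factor that makes the second-moment matrix exactly $I$ rather than $\mathrm{diag}(\gP(c))$.
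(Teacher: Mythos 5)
Your proof is essentially the same as the paper's: both substitute the one-hot posteriors $\gP(\pi_j|x)=\mathbbm{1}[\pi_j=\mu(x)]$ into \eqref{eq:optimal-ncl}, obtain $\phi(x)=\gP(\mu(x))^{-1/2}\vone_{\mu(x)}$, read off sparsity, and evaluate $\E_x\phi(x)\phi(x)^\top$ entrywise (the paper writes the diagonal entry as a direct sum over $x$, you phrase it via idempotency of a $\{0,1\}$ variable and the tower rule, but the computation is identical). Your observation about the $\gP(\mu(x))^{-1/2}$ factor is also correct and worth noting: the theorem statement writes $\phi(x)=\vone_{\mu(x)}$ while the paper's own proof carries this scaling, and it is precisely that factor that yields $\E_x\phi(x)\phi(x)^\top=I$ rather than $\mathrm{diag}(\gP(\pi_i))$.
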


\subsection{Feature Identifiability} 
\label{sec:identifiability}
Despite the advantages of $\phi(x)$ discussed above, it remains unclear whether it is the \emph{unique} solution to NCL. As discussed in Section \ref{sec:limitations}, rotation symmetry will break the desirable properties of $\phi(x)$. In a general context, the uniqueness of NMF is an important research topic in itself \citep{laurberg2008theorems,huang2014NMF}. It is also widely studied in signal analysis and representation learning, known as (feature) identifiability  \citep{fu2018identifiability,zhang2023tricontrastive}, concerning whether NMF can recover ground-truth latent factors (up to trivial transformation), as defined below. 
\begin{definition}[Feature Identifiability]
    We say that an algorithm produces identifiable (or unique) features if any two solutions $f,g$ are equivalent up to permutation and dimensional scaling: there exists a permutation matrix $P$ and a diagonal matrix $D$ such that $f(x)=DPg(x),\forall\ x\in\gX$.
\end{definition}
For NCL, we find that a mild condition on unique samples (Assumption \ref{assumption:latent-uniqueness}) can guarantee NCL's identifiability and disentanglement. This assumption is plausible in practice since it only requires one unique sample in each latent class. Intuitively, the one-hot encodings of unique samples prevent any non-permutative rotation under the non-negative constraint. Therefore, NCL's non-negativity indeed helps break the rotation symmetry of CL features. 
\begin{assumption}[Existence of unique samples]
For every latent class $c\in\gC$, there exists one sample $x\in\gX$ such that it belongs uniquely to $c$, \ie $P(c|x)=1$.
\label{assumption:latent-uniqueness}
\end{assumption}
\begin{theorem}
Under Assumptions \ref{assumption:independence} \& \ref{assumption:latent-uniqueness}, the solution $\phi(x)$ (\eqref{eq:optimal-ncl}) is the unique solution to the NCL objective \eqref{eq:ncl}. As a result, NCL features are identifiable and disentangled.
\label{theorem:unique}
\end{theorem}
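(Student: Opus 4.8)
The plan is to reduce the claim to an essential‑uniqueness statement for the symmetric non‑negative square‑root factorization of $\bar A$, and then to use the unique‑sample assumption as the separability/anchor condition that eliminates the orthogonal ambiguity. First I would invoke Theorem~\ref{thm:equivalance between ncl and nmf} to pass from $\gL_{\rm NCL}$ to $\gL_{\rm NMF}(F)=\|\bar A-F_+F_+^\top\|^2$ with $(F_+)_{x,:}=\sqrt{\gP(x)}f_+(x)^\top$. Under Assumption~\ref{assumption:independence}, $A_{x,x'}=\sum_{c}\gP(c)\gP(x|c)\gP(x'|c)$, so $A=UU^\top$ with the non‑negative matrix $U_{x,c}=\sqrt{\gP(c)}\,\gP(x|c)$, and hence $\bar A=D^{-1/2}AD^{-1/2}=GG^\top$ where $G=D^{-1/2}U\ge 0$. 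A one‑line Bayes computation ($D_{xx}=\gP(x)$ and $\sqrt{\gP(c)}\gP(x|c)=\gP(c|x)\gP(x)/\sqrt{\gP(c)}$) shows $G_{x,c}=\sqrt{\gP(x)}\,\phi_c(x)$, i.e. $G$ is exactly the $F_+$ associated with $f_+=\phi$; in particular $\gL_{\rm NMF}(\phi)=0$ (re‑deriving Theorem~\ref{theorem:optimal solutions}), and any minimizer $F_+\ge 0$ must also satisfy $F_+F_+^\top=\bar A=GG^\top$. So the whole theorem reduces to: the non‑negative factor of $\bar A$ is unique up to permutation and positive scaling.

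Next I would establish the rank and column‑space facts. By Assumption~\ref{assumption:latent-uniqueness}, for each class $c$ there is $x_c$ with $\gP(c|x_c)=1$, hence $\psi(x_c)=\ve_c$ and the $x_c$‑th row of $G$ equals a strictly positive scalar times $\ve_c^\top$. These $m$ rows are linearly independent, so $\mathrm{rank}(\bar A)=\mathrm{rank}(G)=m$; consequently any minimizer $F_+$ must have full column rank $m$ as well (a rank‑deficient $F_+$ would force $\mathrm{rank}(F_+F_+^\top)<m$). Therefore $\mathrm{col}(F_+)=\mathrm{col}(\bar A)=\mathrm{col}(G)$, so $F_+=GQ$ for some invertible $Q\in\sR^{m\times m}$; substituting into $F_+F_+^\top=GG^\top$ and using that $G$ is left‑invertible gives $QQ^\top=I$, i.e. $Q$ is orthogonal.

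Finally I would pin $Q$ down to a permutation. The $x_c$‑th row of $F_+=GQ$ equals a positive scalar times the $c$‑th row of $Q$, so non‑negativity of $F_+$ forces every row of $Q$ to be non‑negative; an orthogonal matrix with non‑negative entries has orthonormal rows with pairwise disjoint supports, hence is a permutation matrix $P$. Thus $F_+=GP$, i.e. $f_+(x)=P^\top\phi(x)$ for all $x\in\gX$, which is precisely identifiability (up to permutation, with trivial scaling $D=I$); since $\phi$ was itself only defined up to the permutation $\pi$, this says all NCL minimizers coincide with $\phi$ up to relabeling coordinates. Disentanglement is then immediate: by Theorems~\ref{theorem:optimal solutions} and~\ref{theorem:optimal}, $\phi$ is axis‑aligned (one‑hot in the non‑overlapping case, sparse and near‑orthogonal in general), and any solution, being a coordinate permutation of $\phi$, inherits these properties.

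The main obstacle is exactly the last step — converting the bare orthogonal ambiguity $Q\in O(m)$ into a genuine permutation. Without Assumption~\ref{assumption:latent-uniqueness}, symmetric NMF square roots are genuinely non‑unique, and one cannot rule out non‑permutative rotations that still keep all entries non‑negative (this is the rotation‑symmetry obstruction of Theorem~\ref{theorem:ambiguity}); the unique/anchor samples are what makes $G$ contain a scaled copy of the identity as a submatrix and thereby do all the work. A secondary point that needs care is justifying that every \emph{global} minimizer attains loss exactly $0$ with rank exactly $m$ (so the column‑space comparison is legitimate), which again follows from the anchor rows forcing $\mathrm{rank}(\bar A)=m$.
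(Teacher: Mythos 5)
Your proof is correct and follows the same architecture as the paper's: (i) pass to symmetric NMF uniqueness, (ii) show any exact non‑negative square root of $\bar A$ differs from $\Phi$ by an orthogonal $Q$, (iii) use the anchor rows from Assumption~\ref{assumption:latent-uniqueness} to force $Q\ge 0$, and (iv) conclude $Q$ is a permutation. The only differences are expository: where the paper invokes Lemma~1 of \citet{paul2016orthogonal} for step~(ii) and Lemma~1.1 of \citet{minc1988nonnegative} for step~(iv), you re‑derive both from scratch (via the rank/column‑space argument, and via the disjoint‑supports observation for orthonormal non‑negative rows), and you explicitly verify $\operatorname{rank}(\bar A)=m$ — a hypothesis needed to apply the cited orthogonal‑ambiguity lemma that the paper's proof uses implicitly — so your write‑up is in fact slightly more self‑contained than the paper's.
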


\subsection{Downstream Generalization}

In real-world applications, pretrained features are often transferred to facilitate various downstream tasks. Following the previous convention, we consider the linear classification task, where a linear head $g(z)=W^\top z$ is applied to pretrained features $z=\phi(x)$ (\eqref{eq:optimal-ncl}). The linear head is learned on the labeled data $(x,y)\sim \gP(x,y)$, where $y\in\gY=\{y_1,\dots,y_C\}$ denotes an observed label. As discussed in Section \ref{sec:framework}, latent classes are supposed to correspond to more fine-grained clusters within each observed class. For example, an observed class ``dog'' contains many species of dogs as its latent classes. Thus, we assume that each observed label $y$ contains a set of latent classes $\gC_y\subseteq \gC$, and there is no overlap between different observed labels, \ie $\forall\ i\neq j,\gC_{y_i}\bigcap\gC_{y_j}=\emptyset$.

The following theorem characterizes the optimal linear classifier under this setting, and shows that the linear classifier alone can attain the Bayes optimal classifier, $\argmax_y\gP(y|x)$. In other words, ideally, a linear classifier is enough for the classification task. Nevertheless, in practice, since the learning process is not as perfect as we have assumed, fully fine-tuning usually brings further gains.
\begin{theorem}
With the optimal NCL features $\phi(x)$, there exists a linear classifier $g^*(z)=W^{*\top}z$ such that its prediction $p(x)=\argmax_y [g(\phi(x))]_y$ is \textbf{Bayes optimal}, specifically,
\begin{equation}
W^*=[w^*_1,\dots,w^*_C], \text{ where } w^*_y=[\sqrt{\gP(\pi_1)}1_{\pi_1\in\gC_y},\dots,\sqrt{\gP(\pi_m)}1_{\pi_m\in\gC_y}], \forall y\in[C].
\label{eq:ncl-optimal-classifier}
\end{equation}
\label{theorem:downstream}
\end{theorem}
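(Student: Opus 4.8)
The plan is to prove the claim by direct verification: I will substitute the proposed weight matrix $W^*$ into the linear head and show that its $y$-th logit equals exactly the posterior $\gP(y|x)$, so that the $\argmax$ over $y$ reproduces the Bayes rule $\argmax_y\gP(y|x)$.

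First I would expand the $y$-th coordinate of $g^*(\phi(x))=W^{*\top}\phi(x)$ as the inner product $w_y^{*\top}\phi(x)=\sum_{j=1}^m \sqrt{\gP(\pi_j)}\,1_{\pi_j\in\gC_y}\cdot\frac{1}{\sqrt{\gP(\pi_j)}}\gP(\pi_j|x)$, which is well defined since $\gP(c)>0$ for every $c\in\gC$ by assumption. The normalization factors $\sqrt{\gP(\pi_j)}$ built into $\phi$ (\eqref{eq:optimal-ncl}) cancel against the matching factors in $w_y^*$, leaving $\sum_{j=1}^m 1_{\pi_j\in\gC_y}\,\gP(\pi_j|x)$. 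Because $[\pi_1,\dots,\pi_m]$ is merely a permutation of the latent-class indices, this finite sum is permutation-invariant and equals $\sum_{c\in\gC_y}\gP(c|x)$.

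Next I would invoke the downstream modeling assumption that the sets $\gC_y$ form a partition of $\gC$: disjoint across observed labels (stated as $\gC_{y_i}\cap\gC_{y_j}=\emptyset$ for $i\neq j$) and jointly exhaustive, since every latent class lies inside some observed class. Under this assumption the latent class of a sample $x$ determines its observed label, so marginalizing the latent posterior over $\gC_y$ gives $\sum_{c\in\gC_y}\gP(c|x)=\gP(y|x)$. Chaining the identities yields $[g^*(\phi(x))]_y=\gP(y|x)$ for all $y\in[C]$ and all $x\in\gX$, hence $p(x)=\argmax_y[g^*(\phi(x))]_y=\argmax_y\gP(y|x)$, the Bayes-optimal classifier; ties are immaterial since the two score vectors agree entrywise.

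I expect the only genuinely delicate point to be the marginalization identity $\sum_{c\in\gC_y}\gP(c|x)=\gP(y|x)$: it rests on the latent-to-observed map being well defined (each latent class belongs to exactly one observed label, from the disjointness of the $\gC_y$) and on $\bigcup_y\gC_y=\gC$ so that no posterior mass is lost. The remaining steps — cancellation of the normalization constants and permutation-invariance of a finite sum — are routine algebra. I would also remark that the argument simultaneously shows $g^*$ is, up to the scaling and permutation ambiguity of $\phi$, the unique linear head realizing the posterior exactly, which explains why no non-linear head is needed in this idealized regime.
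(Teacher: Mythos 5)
Your proof is correct and follows essentially the same route as the paper's: substitute $W^*$, observe the $\sqrt{\gP(\pi_j)}$ factors cancel, and conclude $[g^*(\phi(x))]_y=\sum_{j}\mathbbm{1}_{\pi_j\in\gC_y}\gP(\pi_j|x)=\gP(y|x)$. Your added remark that the marginalization step $\sum_{c\in\gC_y}\gP(c|x)=\gP(y|x)$ implicitly requires the $\gC_y$ to be not only disjoint (which the paper states) but also jointly exhaustive is a fair and useful clarification the paper leaves tacit.
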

\vspace{-0.2in}
The optimal linear classifier $g^*(z)$ also enjoys good explanability, since only latent classes in $\gC_y$ have non-zero weights in $w_y$. Therefore, we can directly inspect how each latent class $c$ belongs to a certain observed class $y$ by observing non-zero weights in $W^*$.

\section{Applications}
\label{sec:application}
In this section, leveraging the ability of non-negative contrastive learning (NCL) to extract explanatory features, we explore three application scenarios: feature selection, feature disentanglement, and downstream generalization. In the former two scenarios, NCL brings significant benefits over canonical CL. In the standard evaluation with downstream classification, NCL also obtains better performance than CL under both linear probing and fine-tuning settings. By default, we compare CL and NCL objectives using the SimCLR backbone on ImageNet-100 with 200 training epochs. \revise{For all experiments, we run 3 random trials and report their mean and standard deviation.}

\subsection{Feature Selection}

\revise{As shown in Figure \ref{fig:sample-sparisty}, NCL features are only sparsely activated on a few feature dimensions, and upon our closer examination, many feature dimensions are only activated with a few examples. Leveraging this feature sparsity property, 
we can sort out feature importance with non-negative features and use this as a guidance for selecting a small subset of features while retaining most of the performance}. This is particularly useful for large-scale image or text retrieval tasks by allowing a flexible tradeoff between performance and computation cost, a natural way to realize \emph{shortening embeddings} \citep{openai2024embedding} without additional training strategies as in  \citet{kusupati2022matryoshka}.

Here, we explore a simple strategy to rank non-negative features $f=[f_1,\dots,f_k]$ according to their Expected Activation (EA) on the training data, $\operatorname{EA}(f_i)=\E_{x}\tilde{f}_i(x)$, where $\tilde{f}(x)=f(x)/\|f(x)\|$ is normalized. Intuitively, higher EA indicates common high-level features that represent class semantics, and lower EA corresponds to rare and noisy features that are less useful.
In our experiments, we select the top 512 features among all 2048 features of projector outputs, and evaluate them with linear probing, image retrieval, and transfer learning on ImageNet-100 (details in Appendix \ref{app:feature selection}).

\begin{table}[t]
    \centering
 \caption{Comparison of CL and NCL (ours) on ImageNet-100 in terms of three feature selection tasks: linear probing (measured by classification accuracy), image retrieval (measured by mean Average Precision @ 10 (mAP@10), and transfer learning (measured by classification accuracy).}
 \resizebox{\linewidth}{!}{
\begin{tabular}{@{}lcccccc@{}}
\toprule
   \multirow{2}{*}{Selection}          & \multicolumn{2}{c}{Linear Probing} & \multicolumn{2}{c}{Image Retrieval} & \multicolumn{2}{c}{Transfer Learning} \\ 
             & CL               & \textbf{NCL}             & CL               & \textbf{NCL}              & CL                & \textbf{NCL}               \\ \midrule
\rowcolor{gray!20}
All (2048)     & 66.8 $\pm$ 0.2     & \textbf{68.9 $\pm$ 0.1}      & 10.9 $\pm$ 0.2     & \textbf{14.2 $\pm$ 0.2}     & 17.2 $\pm$ 0.1      & \textbf{19.9 $\pm$ 0.1}      \\
Random (512)  & 66.2 $\pm$ 0.1 {\color{red}(-0.6)}    & 64.3 $\pm$ 0.2  {\color{red}(-5.6)}   & 10 $\pm$ 0.1 {\color{red}(-0.9)}     & 8.2 $\pm$ 0.1  {\color{red}(-6.0)}     & 16.6 $\pm$ 0.2 {\color{red}(-0.6)}     & 16.7 $\pm$ 0.1  {\color{red}(-3.2)}    \\
EA (512, w/o ReLU)  & 66.3 $\pm$ 0.2 {\color{red}(-0.5)}    & 66.7 $\pm$ 0.1 {\color{red}(-2.2)}    & 9.9 $\pm$ 0.21  {\color{red}(-0.9)}   & 11.1 $\pm$ 0.2 {\color{red}(-3.1)}     & 16.5 $\pm$ 0.3  {\color{red}(-0.7)}    & 17.7 $\pm$ 0.2  {\color{red}(-2.2)}    \\
\textbf{EA (512, w/ ReLU) (ours)} & 66.5 $\pm$ 0.1 {\color{red}(-0.3)}    & \textbf{68.9 $\pm$ 0.3 {\color{green}(-0.0)}}   & 10.2 $\pm$ 0.2  {\color{red}(-0.7)}    & \textbf{14.2 $\pm$ 0.2 {\color{green}(-0.0)}}     & 16.6 $\pm$ 0.3  {\color{red}(-0.6)}    & \textbf{19.8 $\pm$ 0.1 {\color{red}(-0.1)}}      \\  \bottomrule
\end{tabular}
\label{tab:selection}
}
\end{table}

\textbf{Results.} \revise{Table \ref{tab:selection}} summarizes the results of feature selection experiments. First, when using all features, NCL achieves significant gains over CL on all three tasks. Meanwhile, when selecting 1/4 dimensions (512/2048), with EA criterion (w/ ReLU), NCL is almost ``lossless'', since it attains comparable performance to using all features on all tasks. In comparison, CL features always suffer from a larger drop when using fewer features. Thus, benefiting from the sparsity and disentanglement properties, NCL allows flexible feature selection that retains most of the original performance.

\subsection{Feature Disentanglement}
\label{sec:feature disentanglement}
According to Section \ref{sec:identifiability}, NCL features enjoy identifiability (under mild conditions), which is often regarded as a sufficient condition to achieve feature disentanglement \citep{khemakhem2020variational}, while CL suffers from severe feature ambiguities (Theorem \ref{theorem:ambiguity}). To validate this property, we show that NCL can significantly improve the degree of feature disentanglement on real-world data.

\textbf{Evaluation Metric.} Many disentanglement metrics like MIG \citep{chen2018isolating} are supervised and require ground-truth factors that are not available in practice, so current disentanglement evaluation is often limited to synthetic data \citep{zaidi2020measuring}. To validate the disentanglement on real-world data, we adopt an unsupervised disentanglement metric SEPIN@$k$ \citep{do2019theory}. SEPIN@$k$ measures how each feature $f_i(x)$ is disentangled from others $f_{\neq i}(x)$ by computing their conditional mutual information with the top $k$ features, \ie $ \text{SEPIN}@k = \frac{1}{k}\sum\limits_{i=1}^{k}I(x,f_{r_i}(x)| f_{\neq r_i}(x))$, which are estimated with InfoNCE lower bound \citep{InfoNCE} in practice (see Appendix \ref{app:disentanglement}). 
\begin{table}[t]
\centering
\caption{Feature disentanglement score (measured by SEPIN@$k$ \citep{do2019theory}) of CL and NCL on ImageNet-100, where $k$ denotes the top-$k$ dimensions. Values are scaled by $10^{2}$.}
\resizebox{.6\textwidth}{!}{
\begin{tabular}{@{}lcccc@{}}
\toprule
             & SEPIN@1                 & SEPIN@10               & SEPIN@100              & SEPIN@all              \\ \midrule
CL           & 0.88 $\pm$ 0.08           & 0.79 $\pm$ 0.02          & 0.69 $\pm$ 0.01          & 0.47 $\pm$ 0.01          \\
\textbf{NCL} & \textbf{7.43 $\pm$ 0.15} & \textbf{5.93 $\pm$ 0.12} & \textbf{3.87 $\pm$ 0.04} & \textbf{0.48 $\pm$ 0.01} \\ \bottomrule
\end{tabular}
}\label{tab:SEPIN}
\end{table}

\textbf{Results.} As shown in Table \ref{tab:SEPIN}, NCL features show much better disentanglement than CL in all top-$k$ dimensions. The advantage is larger by considering the top features (\eg 0.69 \emph{v.s.}~3.87 SEPIN@$100$), since learned features also contain noisy dimensions. This verifies our identifiability result that with non-negativity constraints, NCL indeed brings better feature disentanglement on real-world data.

\subsection{Downstream Generalization}

\begin{table}[t]
\centering
\caption{Evaluation of learned representations on linear probing (LP) and finetuning (FT) tasks. (a) in-distribution evaluation on three benchmark datasets: CIFAR-100, CIFAR-10, and ImageNet-100. (b) linear probing accuracy of ImageNet-100 pretrained features on three OOD datasets.}
\label{tab:in-distribution}

\begin{subtable}[h]{.6\linewidth}
\centering
\caption{in-distribution evaluation}
\resizebox{\linewidth}{!}{
\begin{tabular}{@{}lcccccc@{}}
\toprule
\multirow{2}{*}{Method}      & \multicolumn{2}{c}{CIFAR-100} & \multicolumn{2}{c}{CIFAR-10}    & \multicolumn{2}{c}{ImageNet-100}  \\
 & LP           & FT           & LP           & FT           & LP           & FT           \\ \midrule
CL  & 58.6 $\pm$ 0.2 & 72.6 $\pm$ 0.1 & 87.6 $\pm$ 0.2 & 92.3 $\pm$ 0.1 & 68.7 $\pm$ 0.3 & 77.3 $\pm$ 0.5 \\
\textbf{NCL} & \textbf{59.7 $\pm$ 0.4} & \textbf{73.0 $\pm$ 0.2} & \textbf{87.8 $\pm$ 0.2} & \textbf{92.6 $\pm$ 0.1}  & \textbf{69.4 $\pm$ 0.3} & \textbf{79.2 $\pm$ 0.4} \\ 
\bottomrule
\end{tabular}  
}
\end{subtable}\quad
\begin{subtable}[h]{.35\textwidth}
\caption{out-of-distribution transferability}
\label{tab:ood}
\resizebox{\linewidth}{!}{
\begin{tabular}{@{}lccc@{}}
\toprule
{Method}  & Stylized     & Corruption   & Sketch      \\ \midrule
CL     & 19.6 $\pm$ 0.4 & 34.5 $\pm$ 0.2 & 27.1 $\pm$ 0.1 \\
\textbf{NCL}    & \textbf{21.2 $\pm$ 0.2} & \textbf{36.1 $\pm$ 0.3} & \textbf{28.0 $\pm$ 0.2} \\ \midrule
\end{tabular}
}
\end{subtable}
\end{table}

Finally, we evaluate CL and NCL on the linear probing and fine-tuning tasks using all features on three benchmark datasets: CIFAR-10, CIFAR-100 \citep{cifar}, and ImageNet-100 \citep{imagenet}. Apart from in-distribution evaluation, we also compare them on three out-of-distribution datasets, 
stylized ImageNet \citep{imagenetstylized}, ImageNet-Sketch \citep{imagenetsketch}, ImageNet-C \citep{imagenetc} (restricted to ImageNet-100 classes). More details can be found in Appendix \ref{app:benchmark evaluation}. \revise{Following the conventional setup, we adopt the encoder output for better downstream performance. See more results in Appendix \ref{app:projector}.} 

\textbf{Results.} As shown in Table \ref{tab:in-distribution}, \revise{NCL shows superior generalization performance than original contrastive learning across multiple real-world datasets. On average, it improves linear probing accuracy by $0.6\%$ and improves fine-tuning accuracy by $0.9\%$.} Moreover, we observe from Table \ref{tab:ood} that NCL shows significant advantages in the out-of-distribution generalization downstream tasks ($+1.4\%$ on average). Considering that NCL only brings minimal change to CL by adding a simple reparameterization with neglectable cost, the improvements are quite favorable. It also aligns well with the common belief that \revise{disentangled feature learning can yield representations that are more robust against domain shifts} \citep{guillaume2019introductory}.
\vspace{-0.1in}
\section{Extension to Broader Scenarios}
\label{sec:extension}
In the above discussions, we have developed NCL in the context of self-supervised learning, where contrastive learning originates from \citep{InfoNCE}. Meanwhile, contrastive learning also finds wide applications in many other scenarios, such as, SupCon \citep{khosla2020supervised,cui2023rethinking} for supervised learning and CLIP \citep{radford2021learning} for multi-modal learning. The generality of NCL enables easy extension. First, it is straightforward to generalize NCL to supervised contrastive learning since they adopt the same contrastive loss. Second, \citet{zhang2023generalization} shows that multi-modal contrastive learning (MMCL) like CLIP is equivalent to \emph{asymmetric} matrix factorization, $\|A-F_VF_L^\top\|^2$. Leveraging this connection, we can similarly derive multi-modal NCL as equivalent to asymmetric NMF (Appendix \ref{sec:asymmetric}). We leave a systematic study on broad domains to future work.

\textbf{Non-negative Cross Entropy (NCE).} Besides existing variants of contrastive learning, we also note that NCL can be extended to standard supervised learning with cross-entropy (CE) loss. Notice that the CE loss can be written as $\gL_{\rm CE}(f)=\E_{x,y}\log\frac{\exp(f(x)^\top w_y)}{\sum_{c=1}^C\exp(f(x)^\top w_c)}$, where $f(x)$ is the representation of $x$ and $W=[w_1,\dots, w_C]$ contains the learned embeddings of each class in the last linear layer of $C$-way classification NNs. Thus, CE loss can be seen as a special case of InfoNCE when taking $(x,y)$ as positive pairs and $(f(x),w_y)$ as their representations (\ie a special MMCL loss (Appendix \ref{sec:asymmetric})). Accordingly, we can derive the Non-negative Cross Entropy (NCE) loss by applying a non-negative transformation (\eg ReLU) to $(f(x),w_y)$ in CE and eliminate rotation symmetry.

\begin{figure}\centering
\begin{minipage}[h]{0.45\linewidth}
\centering
\includegraphics[width=.8\linewidth]{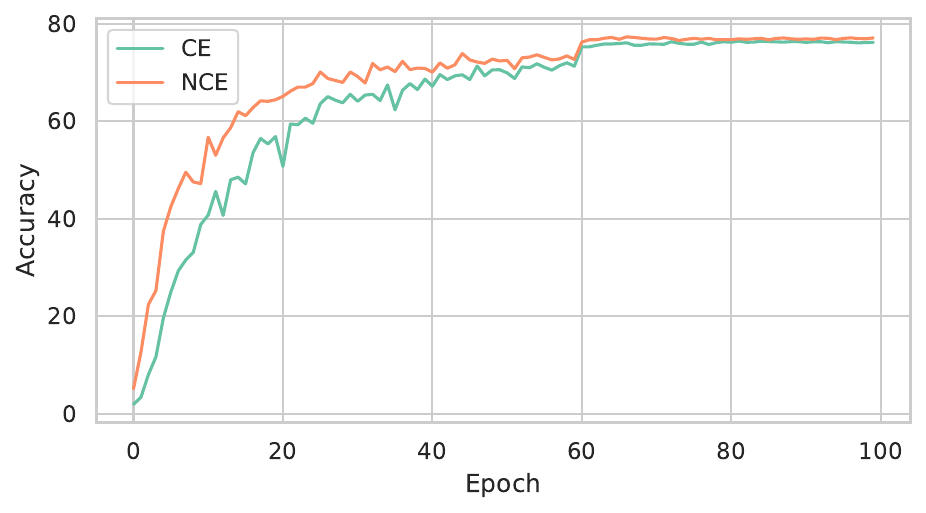}
\vspace{-.15in}
\caption{Training from scratch with CE and NCE (w/o projector) on ImageNet-100.}
\label{fig:sup-training}
\end{minipage}
\quad
\begin{minipage}[h]{.5\linewidth}
\centering
\vspace{0.2in}
\resizebox{.8\linewidth}{!}{
\begin{tabular}{lcc}
\toprule
Loss            & From Scratch  & Finetune   \\ \midrule \rowcolor{gray!20}
CE   & 76.1          & 78.6         \\
NCE & 78.6         & 80.2          \\
CE + MLP projector   & 78.4          & 81.1          \\
NCE + MLP projector  & \textbf{79.2} & \textbf{82.0} \\ \bottomrule
\end{tabular}
}
\vspace{0.1in}
\captionof{table}{Test accuracy (\%) of CE and NCE losses for supervised learning on ImageNet-100.}
\label{tab:supervised}
\end{minipage}
\vspace{-0.2in}
\end{figure}

\textbf{Setup.} 
As a preliminary study, we compare CE and NCE on ImageNet-100 for 1) finetuning a standard SimCLR-pretrained encoder and 2) training from scratch, with a ResNet-18 backbone (w/o projector). We also study adding an MLP projector (as in SimCLR) before the last linear layer, allowing more flexible feature transformations before linear classification (details in Appendix \ref{app:supervised}).

\textbf{Results.} Figure \ref{fig:sup-training} shows that non-negative training with NCE is much faster than conventional CE loss at the early stage (around 2x before the 40th epoch). After training converges, NCL also enjoys slightly better performance. As shown in Table \ref{tab:supervised}, in terms of the final performance, NCE is helpful for training from scratch (2.5\%). With the help of the MLP projector network, NCL also attains significant gains by outperforming original CE by 3.1\% and CE-with-projector by 0.8\%. For finetuning, with MLP projector, NCE can attain a significant gain by improving 3.4\% over CE and 0.9 over CE-with projector. These results show that although simple, non-negative training can be quite helpful for supervised tasks as well.

\vspace{-0.1in}

\section{Conclusion}
Despite the promising performance of contrastive learning, its learned features still have limited interpretability. Inspired by the classical non-negative matrix factorization algorithm, we proposed non-negative contrastive learning (NCL) to impose non-negativity on contrastive features. 
With minimal modifications to canonical CL, NCL can maintain or slightly improve its performance on classical tasks, while significantly enriching the interpretability of output features. We also provided comprehensive theoretical analyses for NCL, showing that NCL enjoys not only good downstream performance but also feature identifiability. At last, we show that NCL can be extended to benefit other learning scenarios as well.
Overall, we believe that NCL may serve as a better alternative to CL in pursuit of better representation learning methodologies.

\section*{Acknowledgement}
Yisen Wang was supported by National Key R\&D Program of China (2022ZD0160300), National Natural Science Foundation of China (62376010, 92370129), and Beijing Nova Program (20230484344).

\bibliography{main}
\bibliographystyle{iclr2024_conference}

\newpage
\appendix

\section{Related Work}
\label{sec:additional-related-work}

The most closely related work is \citet{haochen}, which firstly links contrastive learning back to matrix factorization. This link inspires us to leverage the classical non-negative matrix factorization technique, in turn, to improve modern contrastive learning. Previously, matrix factorization has played an important part in representation learning, for example, GloVe for word representation \citep{pennington2014glove}, but was gradually replaced by deep models. 
Aside from our NMF-based approach, many works study disentangled representation with generative models like VAEs \citep{VAE,chen2018isolating}. \cite{locatello2019challenging} point that disentanglement is impossible without additional assumptions \citep{kivva2022identifiability} or supervision \citep{khemakhem2020variational}. Nevertheless, these methods often have limited representation ability compared to contrastive learning in practice. 
In this work, our goal is to develop a representation algorithm whose performance is as good as contrastive learning, while alleviating its interpretability limitations in real-world scenarios.

\textbf{Relationship to CL.} A deep connection between CL and NCL is that they are both equivalent to matrix factorization problems \wrt a non-negative cooccurrence matrix (\eqref{eq:mf} \& \eqref{eq:nmf}); while the difference in non-negative constraints leads to different solutions, and it enables good explainability of NCL features. Theoretically, CL admits eigenvectors as closed-form solutions (though not computationally tractable), while NMF (NCL) problems generally do not permit closed-form solutions. Yet, we show that under appropriate data assumptions regarding latent classes, we can still identify an optimal solution, and establish its uniqueness and generalization guarantees under certain conditions. Thus, for real-world data obeying our assumptions, NCL features are also guaranteed to deliver good performance, while offering significantly better explainability than canonical CL as discussed above.

\textbf{Relationship to Clustering.} 
In fact, \citet{ding2005equivalence} also establish the equivalence between NMF, Kernel K-means and spectral clustering (SC)\footnote{In order to make this equivalence hold, \citet{ding2005equivalence} add orthogonal constraints to NMF features, which are automatically satisfied under our one-hot latent label setting (Theorem \ref{theorem:optimal}).}. Leveraging this connection, NCL can also be regarded as an (implicit) deep clustering method. Previously, several works \citep{haochen,tan2023contrastive} claim the equivalence between canonical CL and SC. However, this connection is not rigorous, because they only consider the equivalence between CL and eigendecomposition (the first step of SC), while SC also has a K-means step that utilizes eigenvectors to produce cluster assignments as its final outputs (like NCL).\footnote{We note that theoretically, the K-means here is computed over the exponentially large population data $\gX$. So it is not feasible in practice to utilize this connection to produce an NCL solution using CL features.} 

\textbf{Relationship to MF and Deep NMF.}
{Matrix Factorization (MF)} is a classical dimension reduction technique, with mathematical equivalence to many machine learning techniques, such as, PCA, LDA.
However, it is pointed out that features extracted by MF are often hardly visually inspected. A seminal work by \citet{lee1999learning} points that by enforcing non-negativity constraints on MF, \ie non-negative matrix factorization (NMF), we can obtain interpretable part-based features. In their well-known face recognition experiment, each decomposed feature is only activated on a certain facial region, while PCA components have no clear meanings. \cite{ding2005equivalence} further reveal the inherent clustering property of NMF, specifically, the equivalence between NMF and K-means clustering with additional orthogonal constraints. In this way, each NMF feature can be interpreted as the cluster assignment probability. Due to these advantages, NMF is widely applied to many scenarios, including but not limited to recommendation systems, computer vision, document clustering, etc. We refer to \citet{nmf2012review} for a comprehensive review.

In the deep learning era, as a linear model, NMF is gradually replaced with deep neural networks (DNNs) with better feature extraction ability. Following this trend, various deep NMF methods have been proposed, where they replace the linear embedding layer with a multi-layer (optionally nonlinear) encoder, and enforce the non-negativity constraint by either architectural modules or regularization objectives. See \citet{chen2021deepNMF} for an overview of deep NMF methods. Nevertheless, these deep NMF methods can hardly achieve comparable performance to existing supervised and self-supervised learning methods for representation learning.

Our non-negative contrastive learning can also be regarded as a deep NMF method, as we also leverage a neural encoder and show the equivalence between the NCL objective and the NMF objective. However, the key difference is that we do not apply NMF to the actual input (\eg the image), but to the co-occurrence matrix \emph{implicitly} defined by data augmentation. In this way, we can transform the original NMF problem to a sampling-based contrastive loss, which allows us to solve it with deep neural networks.

\section{Proofs}

\subsection{Proof of Theoroem \ref{thm:equivalance between ncl and nmf}}
\begin{proof}
We expend $\gL_{\rm NMF}(F)$ and obtain:
    \begin{align*}
\gL_{\rm NMF}(F)
=& \Vert \bar{A} - F_+F_+^\top \Vert ^2\\
        =&\sum\limits_{x,x_+}\bigg( \frac{\gP(x,x_+)}{\sqrt{\gP(x)\gP(x_+)}}-
       \sqrt{\gP(x)}f_{+}(x)^\top \sqrt{\gP(x_+)}f_{+}(x_+) \bigg)^2\\
        =&\sum\limits_{x,x_+}\bigg(\frac{\gP(x,x_+)^2}{\gP(x)\gP(x_+)} -2\gP(x,x_+)f_{+}(x)^\top f_{+}(x_+)\\
       &+\gP(x)\gP(x_+)\left(f_{+}(x)^\top f_{+}(x_+)\right)^2\bigg)\\
        =&\underbrace{\sum\limits_{x,x_+}\bigg(\frac{\gP(x,x_+)^2}{\gP(x)\gP(x_+)} \bigg)}_{const} 
        -2\E_{x,x_+} f_{+}(x)^\top f_{+}(\revise{x^+}) 
        + \E_{x, x^-}\left(f_{+}(x)^\top f_{+}(x^-) \right)^2\\
        =& \gL_{\rm NCL}+ const,
\end{align*}
which completes the proof.
\end{proof}

\subsection{Proof of Theorem \ref{theorem:optimal solutions}}
\begin{proof}
We expand $\gL_{\rm NMF}(\phi)$ and obtain:
\begin{align*}
\gL_{\rm NMF}(\phi) =&\sum\limits_{x,x_+}\bigg( \frac{\gP(x,x_+)}{\sqrt{\gP(x)\gP(x_+)}}-
       \sqrt{\gP(x)}\phi(x)^\top \sqrt{\gP(x_+)}\phi(x_+) \bigg)^2\\
       =&\sum\limits_{x,x_+}\bigg( \frac{\gP(x,x_+)}{\sqrt{\gP(x)\gP(x_+)}} -
       \sum\limits_{i=1}^m \frac{\gP(x)\gP(x^+)\gP(\pi_i|x)\gP(\pi_i|x^+)}{\sqrt{\gP(x)\gP(x^+)}\gP(\pi_i)}\bigg)^2\\
       =&\sum\limits_{x,x_+}\bigg( \frac{\gP(x,x_+)}{\sqrt{\gP(x)\gP(x_+)}} -
       \sum\limits_{i=1}^m \frac{\gP(\pi_i,x)\gP(\pi_i,x^+)}{\sqrt{\gP(x)\gP(x^+)}\gP(\pi_i)}\bigg)^2\\    
        =&\sum\limits_{x,x_+}\bigg( \frac{\gP(x,x_+)}{\sqrt{\gP(x)\gP(x_+)}} -
       \sum\limits_{i=1}^m \frac{\gP(\pi_i)\gP(x|\pi_i)\gP(x^+|\pi_i)}{\sqrt{\gP(x)\gP(x^+)}}\bigg)^2\\
        =&\sum\limits_{x,x_+}\bigg( \frac{\gP(x,x_+)}{\sqrt{\gP(x)\gP(x_+)}} -
       \frac{\E_c \gP(x|c)\gP(x^+|c)}{\sqrt{\gP(x)\gP(x^+)}}\bigg)^2\\ 
\end{align*}
With Assumption \ref{assumption:independence}, we know that $\gP(x,x^+) = \E_c \gP(x|c)\gP(x^+|c)$.
So $\gL_{\rm NMF}(\phi) = 0$, \ie $\phi \in \arg \min \gL_{\rm NMF}$. Combing with the equivalence between NMF and NCL, we obtain that $\phi$ is an optimal solution of $\gL_{\rm NCL}$.
\end{proof}

\subsection{Proof of Theorem \ref{theorem:optimal}}
\begin{proof}
When each sample only belongs to one latent class $c=\mu(x)$, we have $[\gP(\pi_1|x), \cdots ,\gP(\pi_m|x) ]   = \vone_{\mu(x)}$. Combined with Theorem \ref{theorem:optimal solutions}, we have $\phi(x) = \sqrt{\frac{1}{\gP(\pi_{\mu(x)})}}\vone_{\mu(x)}$. As there only exists one non-zero elements for $\phi(x)$, we obtain $\Vert \phi(x)\Vert_0 = 1$. And for the orthogonality, we have $\E_x\phi_i(x)\phi_j(x) = \sum\limits_{x} \gP(x) \sqrt{\frac{1}{\gP(\pi_j)\gP(\pi_j)}}\mathbbm{1}_{\mu(x)=\pi_i}\mathbbm{1}_{\mu(x)=\pi_j}.$ So when $i\neq j$, $\E_x\phi_i(x)\phi_j(x)=0$. And when $i=j$, $\E_x\phi_i(x)\phi_j(x) = \frac{1}{\gP(\pi_i)}\sum\limits_{x}\gP(x)\mathbbm{1}_{\mu(x)=\pi_i} =1$. Then we obtain: $\E_{x}\phi(x)\phi(x)^\top = I$.
\end{proof}

\subsection{Proof of Theorem \ref{theorem:unique}}

\begin{proof}    
First, we introduce two useful results from previous works.

\begin{lemma}[Lemma 1 of \citet{paul2016orthogonal}]
For any $N \times N$ symmetric matrix $\bar A$, if $rank(\bar A) = k \leq N$, then the order $k$ exact symmetric NMF of $\bar A$ is unique up to an orthogonal matrix.
\label{unique under orthogonal}
\end{lemma}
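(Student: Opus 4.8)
The plan is to prove this as a purely linear-algebraic statement: the non-negativity of the factors plays no role in the lemma itself, and the claim amounts to saying that any two exact order-$k$ symmetric factorizations $\bar A = F F^\top = G G^\top$ with $F,G\in\sR^{N\times k}$ are related by $F = G Q$ for a single orthogonal matrix $Q\in O(k)$. (The non-negativity is what is later combined with this gauge description to pin the factor down uniquely, but it is not needed here.) I would begin by noting that a symmetric factorization $\bar A = F F^\top$ forces $\bar A$ to be positive semidefinite, so this hypothesis is implicit in the statement; everything after that is elementary rank bookkeeping.

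First I would record the key rank fact: for any real $F$ one has $\operatorname{rank}(F F^\top)=\operatorname{rank}(F)$ and $\operatorname{col}(F F^\top)=\operatorname{col}(F)$ (since $\ker(FF^\top)=\ker(F^\top)$). Applying this to an exact order-$k$ factorization $\bar A = F F^\top$ with $\operatorname{rank}(\bar A)=k$ gives $\operatorname{rank}(F)=k$, i.e.\ $F$ has full column rank, and $\operatorname{col}(F)=\operatorname{col}(\bar A)$. The same holds for any second such factorization $\bar A = G G^\top$, so $\operatorname{col}(F)=\operatorname{col}(G)$.

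Next, since every column of $F$ lies in the span of the columns of $G$, there is a $k\times k$ matrix $M$ with $F = G M$; and because $k=\operatorname{rank}(F)=\operatorname{rank}(GM)\le\operatorname{rank}(M)\le k$, the matrix $M$ is invertible. Substituting into the factorization identity yields $G M M^\top G^\top = F F^\top = G G^\top$. Since $G$ has full column rank, $G^\top G$ is invertible, so left-multiplying by $(G^\top G)^{-1}G^\top$ and right-multiplying by $G(G^\top G)^{-1}$ gives $M M^\top = I_k$; hence $M\in O(k)$. Thus $F = G M$ with $M$ orthogonal, which is exactly uniqueness of the order-$k$ exact symmetric NMF up to an orthogonal matrix.

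The step I expect to be the only real subtlety — and the one I would be careful to state explicitly — is the rank matching: it is essential that the factorization order equals $\operatorname{rank}(\bar A)$, since this is precisely what forces both $F$ and $G$ to have full column rank $k$. Without it the connecting matrix $M$ need not be square (let alone orthogonal), and there is genuine extra gauge freedom; so the hypothesis $\operatorname{rank}(\bar A)=k$ is doing all the work. The remaining manipulations (the PSD remark, the $\operatorname{col}/\ker$ identities, and pseudo-inverting $G$) are routine and I would not belabour them.
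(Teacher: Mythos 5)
Your proof is correct, and it is worth noting that the paper itself does not prove this lemma at all: it is imported by citation from \citet{paul2016orthogonal} and used as a black box in the proof of Theorem \ref{theorem:unique}, so there is no in-paper argument to compare against. Your derivation is a complete, elementary reconstruction of the cited fact, and you correctly observe that it is really a statement about \emph{arbitrary} exact symmetric factorizations $\bar A = FF^\top = GG^\top$ with $F,G\in\sR^{N\times k}$ --- non-negativity plays no role at this stage and only enters later, when the orthogonal gauge $T$ is forced to be a permutation (via the unique-sample assumption and the fact that a non-negative orthogonal matrix is a permutation). The chain of steps is sound: $\ker(FF^\top)=\ker(F^\top)$ gives $\operatorname{rank}(F)=\operatorname{rank}(\bar A)=k$ and $\operatorname{col}(F)=\operatorname{col}(\bar A)=\operatorname{col}(G)$, hence $F=GM$ with $M$ square and invertible by the rank sandwich; substituting into $FF^\top=GG^\top$ and cancelling $G$ via $(G^\top G)^{-1}G^\top$ on both sides yields $MM^\top=I_k$, so $M$ is orthogonal. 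You are also right to flag that the hypothesis $\operatorname{rank}(\bar A)=k$ is what makes both factors full column rank and the connecting matrix square; this matches how the lemma is applied in the paper, where the rank of $\bar A$ equals the number of latent classes $m=k$ under the stated assumptions.
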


\begin{lemma}[Lemma 1.1 of \citet{minc1988nonnegative}]
The inverse of a non-negative matrix matrix $M$ is non-negative if and only if $M$ is a generalized permutation matrix.
\label{generalized permutation}
\end{lemma}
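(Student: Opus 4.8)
The plan is to prove the two implications separately, treating the ``only if'' direction as the substantive one. Throughout I would write $n$ for the order of $M$ and note at the outset that speaking of $M^{-1}$ already presumes $M$ is invertible; since $M\geq 0$, any nonzero entry of $M$ or of $M^{-1}$ is automatically strictly positive, so ``generalized permutation matrix'' here means a matrix with exactly one nonzero (hence positive) entry in each row and each column, equivalently $M=PD$ with $P$ a permutation matrix and $D$ a positive diagonal matrix.

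For the ``if'' direction I would simply write such an $M$ as $M=PD$; then $M^{-1}=D^{-1}P^{-1}=D^{-1}P^\top$ is the product of a non-negative diagonal matrix and a permutation matrix, hence non-negative, and this costs nothing. For the ``only if'' direction I would assume $M\geq 0$ and set $N:=M^{-1}\geq 0$. The one computation needed is the off-diagonal entries of $MN=I$: for $i\neq j$ we have $\sum_k M_{ik}N_{kj}=0$, and since every term is non-negative each term must vanish, so $M_{ik}>0$ forces $N_{kj}=0$ for all $j\neq i$. Thus, as soon as a column index $k$ lies in the support of some row $i$ of $M$, row $k$ of $N$ is supported on column $i$ alone; since $N$ is invertible that row cannot vanish, so row $k$ of $N$ is a positive multiple of $e_i^\top$. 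The hard part --- really the only place that needs a moment's thought --- is to rule out a row of $M$ with two positive entries: if $M_{ik_1},M_{ik_2}>0$ with $k_1\neq k_2$, the previous observation would make rows $k_1$ and $k_2$ of $N$ both positive multiples of $e_i^\top$, contradicting $\det N\neq 0$. Hence every row of $M$ has exactly one positive entry; and because $M$ is invertible these single entries must sit in distinct columns (two rows sharing their nonzero column would be proportional, forcing $\det M=0$), so there is exactly one positive entry per column as well, that is, $M$ is a generalized permutation matrix.

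I do not anticipate a genuine obstacle: the argument rests entirely on the elementary fact that a sum of non-negative numbers is zero only when each summand is zero, together with two short appeals to the invertibility of $M$ and of $N$. If one preferred to avoid the determinant argument for the column statement, an alternative is to rerun the row argument on $M^\top$, whose inverse $(M^{-1})^\top=N^\top$ is again non-negative; but the version above is the shortest route.
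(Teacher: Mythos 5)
Your proof is correct; the paper itself does not prove this statement but simply imports it as Lemma~1.1 of \citet{minc1988nonnegative}, and your argument is essentially the standard one found there: the vanishing of the off-diagonal entries of $MN=I$, together with non-negativity, pins down the support of the rows of $N=M^{-1}$, and invertibility of $N$ and of $M$ then forces exactly one positive entry per row and per column. The only step left implicit --- that invertibility of $M$ rules out a zero row, giving ``at least one'' positive entry per row --- is trivial and does not affect the validity of the argument.
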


Let $\Phi \in \sR_+^{N \times k}$ be the feature matrix composed of $N$ features, \ie the $x$-the row is $\sqrt{\gP(x)}\phi(x)^\top$.
Note that $\Phi$ satisfies $\bar{A}=\Phi \Phi^\top$, an exact non-negative matrix factorization of $\bar A$ (Theorem \ref{theorem:optimal}).
If $\phi$ is not unique, there exists another $F_+\neq\Phi$ and $F_+\in\sR_+^{N \times k}$ such that $\bar A=F_+F_+^\top$.

According to Lemma \ref{unique under orthogonal}, we can deduce $H = \Phi T$, where $T \in \sR^{k \times k}$ is an orthogonal matrix. As $\Phi T$ is also an exact symmetric NMF of $\bar{A}$, it satisfies $\Phi T \geq 0$.
For each $x \in \gX$, it holds that $\phi(x)^\top T \geq 0$.
Since for every latent class $c\in\gC$, there exists a sample $x_c \in\gX$ such that $P(c|x_c)=1$, the $c$-th entry of $\phi(x_c)$ is $\sqrt{\frac{1}{p(\pi_c)}}$, while all other entries are 0.
Denote the row vectors of matrix $T$ as $T_1, T_2, \ldots, T_k$. As $\phi(x_c)^\top T \geq 0$, we must have $T_c \geq 0$. Applying this deduction to every latent class, we have $\forall c\in\gC, T_c\geq0$, \ie $T\geq0$.

Recall that $T\geq0$ is also an orthogonal matrix with $T^\top T = I$. According to Lemma \ref{generalized permutation}, $T$ must be a permutation matrix. Therefore, $\phi(\cdot)$ is unique under permutation.
\end{proof}

\subsection{Proof of Theorem \ref{theorem:downstream}}
\begin{proof}
We consider the $y$-th dimension of the prediction:
\begin{align*}
    &g^\star_y(\phi(x))\\
    =& (W^\star)^\top \phi(x)\\
                      =& \sum\limits_{j=1}^m \sqrt{\gP(\pi_j)\mathbbm{1}_{\pi_j\in \gC_y}}\frac{\gP(\pi_j|x)}{\sqrt{\gP(\pi_j)}}\\
                      =&\sum\limits_{j=1}^m \gP(\pi_j|x)\mathbbm{1}_{\pi_j\in \gC_y}\\
                      =&\gP(y|x).
\end{align*}
So $\argmax g^\star(\phi(x)) =\argmax\gP(y|x)$. In other words, the classifier attains the Bayes optimal classifier.
\end{proof}

\section{Extension to Multi-modal Contrastive Learning}
\label{sec:asymmetric}
In the main paper, we have proposed non-negative contrastive learning (NCL) for self-supervised learning, where the co-occurrence matrix $A$ is symmetric by definition. Apart from the self-supervised setting, we know that contrastive learning is also successfully applied to visual-language representation learning that involves multi-modal data, \eg CLIP \citep{clip}. \citet{zhang2023generalization} show that multi-modal contrastive learning is mathematically equivalent to an asymmetric matrix factorization problem. Based on this link, we can develop the asymmetric version of non-negative contrastive learning in the multi-modal setting, that can also enable the interpretability of visual-language representations.

Following the similar spirit of symmetric contrastive learning, asymmetric contrastive learning also aligns positive pairs together while pushing negative samples apart. However, there exist differences in the data sampling process. Taking the image-text pairs as an example, we use $X_V$ to denote the set of all visual data with distribution $\gP_V$, and $\gX_L$ to denote the set of all language data with distribution $\gP_L$. Then the positive pairs $(x_v,x_l)$ are sampled from the joint multi-modal distribution $\gP_M$ and the negative pairs $(x_v^-,x_l^-)$ are independently drawn from $\gP_V$ nad $\gP_L$. And we denote $A_M: (A_M)_{x_v,x_l} = \gP_M(x_v,x_l)$ as the asymmetric co-occurrence matrix. For the ease of theoretical analysis, we consider the multi-modal spectral contrative loss:
\begin{equation}
\begin{aligned}
\mathcal{L}_{\rm MMCL}(f_V,f_L) = -2\mathbb{E}_{x_v,x_l}f_V(x_v)^\top f_L(x_l)+
\mathbb{E}_{x^-_v,x^-_l}(f_V(x_v^-)^\top f_L(x_l^-))^2,
\end{aligned}
\label{eqn:asy spectral loss}
\end{equation}
where $f_V$ and $f_L$ are two encoders that respectively encode vision and language samples. Correspondingly, we propose the multi-modal variant of NCL loss:
\begin{equation}
\begin{gathered}
\mathcal{L}_{\rm MMNCL}(f_{V+},f_{L+}) = -2\mathbb{E}_{x_v,x_l}f_{V+}(x_v)^\top f_{L+}(x_l)+
\mathbb{E}_{x^-_v,x^-_l}(f_{V+}(x_v^-)^\top f_{L+}(x_l^-))^2,\\
\text{such that } f_{V+}(x_v) \geq 0 ,f_{L+}(x_l) \geq 0, \forall x_v \in \gX_V, x_l \in \gX_L.
\end{gathered}
\label{eqn:asy n spectral loss}
\end{equation}
And we introduce asymmetric non-negative matrix factorization objective:
\begin{equation}
    L_{\rm ANMF} = \Vert \bar{A}_M - F_{V+}F_{L+}^\top\Vert^2, \text{such that }  F_{V+}, F_{L+}\geq 0,
\end{equation}
where $(F_{V+})_{x_v} = \sqrt{\mathcal{P}_V(x_v)}f_{V+}(x_v)^\top$, $(F_{L+})_{x_l} = \sqrt{\mathcal{P}_L(x_l)}f_{L+}(x_l)^\top$ and $\bar{A}_M$ is the normalized co-occurrence matrix. Then we establish the equivalence between MMNCL and asymmetric non-negative matrix factorization objective following the proofs in Theorem \ref{thm:equivalance between ncl and nmf}:
\begin{proof}
\begin{align*}
\gL_{\rm ANMF}(F_{V},F_{L})
&= \Vert \bar{A}_M - F_{V+}F_{L+}^\top \Vert ^2\\
        =&\sum\limits_{x_v,x_l}\bigg( \frac{\gP_M(x_v,x_l)}{\sqrt{\gP_V(x_v)\gP_L(x_l)}}-
       \sqrt{\gP_V(x_v)}f_{V+}(x_v)^\top \sqrt{\gP_L(x_l)}f_{L+}(x_l) \bigg)^2\\
        =&\sum\limits_{x_v,x_l}\bigg(\frac{\gP_M(x_v,x_l)^2}{\gP_V(x_v)\gP_L(x_l)} -2\gP_M(x_v,x_l)f_{V+}(x_v)^\top f_{L+}(x_L)\\
       &+\gP_V(x_v)\gP_L(x_l)\left(f_{V+}(x_v)^\top f_{L+}(x_L)\right)^2\bigg)\\
        =&\underbrace{\sum\limits_{x_v,x_l}\bigg(\frac{\gP_M(x_v,x_l)^2}{\gP_V(x_v)\gP_L(x_l)} \bigg)}_{const} 
        -2\E_{x_v,x_l} f_{V+}(x_v)^\top f_{L+}(x_l) 
        + \E_{x_v^-, x_l^-}\left(f_{V+}(x_v^-)^\top f_{L+}(x_l^-) \right)^2\\
        =& \gL_{\rm MMNCL} + const.
\end{align*}
\end{proof}
Following the proofs of Theorem \ref{theorem:optimal solutions}, then we can obtain the optimal solutions of MMNCL under the positive generation assumption ($\forall x_v\in \gX_V, x_l\in \gX_L, \gP_M(x_v,x_l) = \E_c\gP(x_v|c)\gP(x_l|c)$):
\begin{align*}
    \phi_V(x) &= \left[\frac{1}{\sqrt{\gP(\pi_1)}}\gP({\pi_1|x_v}),\dots,\frac{1}{\sqrt{\gP(\pi_m)}}\gP({\pi_m}|x_v)\right]\in\sR_+^m, \forall\ x_v\in\gX_V,\\
        \phi_L(x) &= \left[\frac{1}{\sqrt{\gP(\pi_1)}}\gP({\pi_1|x_l}),\dots,\frac{1}{\sqrt{\gP(\pi_m)}}\gP({\pi_m}|x_l)\right]\in\sR_+^m, \forall\ x_l \in\gX_L.
\end{align*}
We observe that the optimal representations of MMNCL have a similar form to NCL, As a result, we can directly follow the theoretical analysis in NCL and prove that the learned representations of MMNCL also show semantic consistency such as dimensional clustering, sparsity and orthogonality. 

\revise{
\section{Additional Empirical Analyses}
\label{app:additional-analysis}

\subsection{Performance on the Spectral Contrastive Learning Loss}

\begin{figure}[h]
    \centering
    \begin{subfigure}{0.2\textwidth}
    \includegraphics[width=\linewidth]{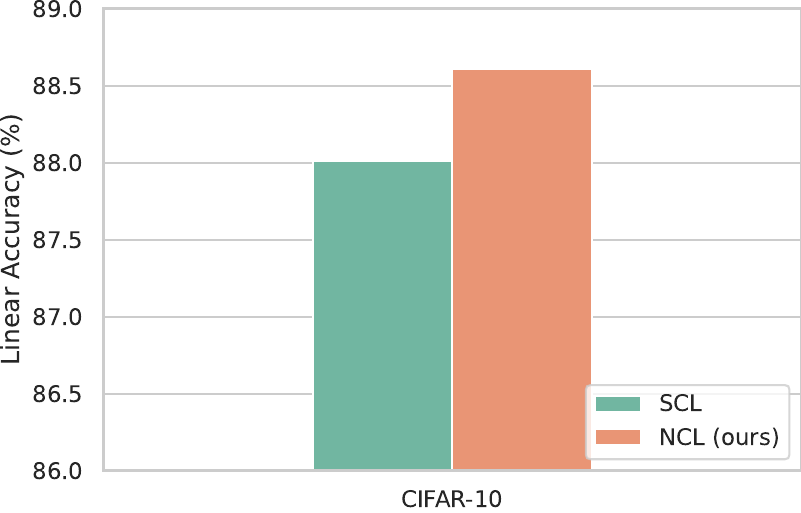}
    \caption{Linear Accuracy}
    \label{fig:linear-scl}
    \end{subfigure}
    \hspace{0.05in}    
    \begin{subfigure}{0.2\textwidth}
    \includegraphics[width=\linewidth]{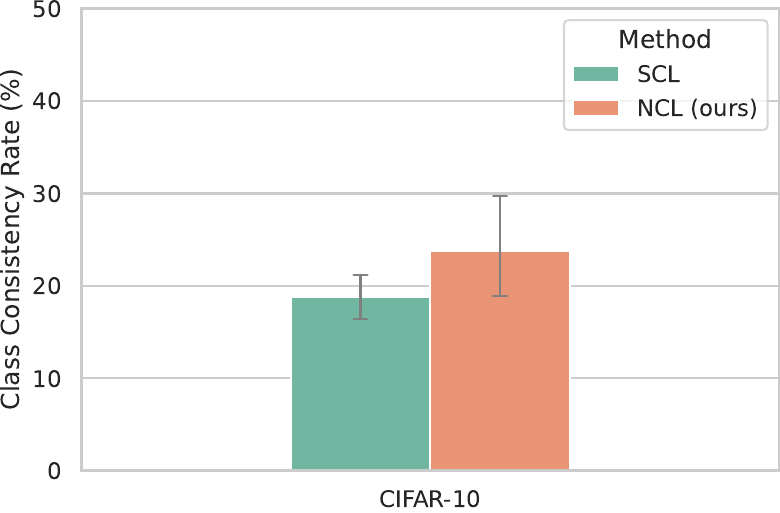}
    \caption{Class Consistency}
    \label{fig:dimensional-clustering-scl}
    \end{subfigure}
    \hspace{0.05in}
    \begin{subfigure}{0.2\textwidth}
    \includegraphics[width=\linewidth]{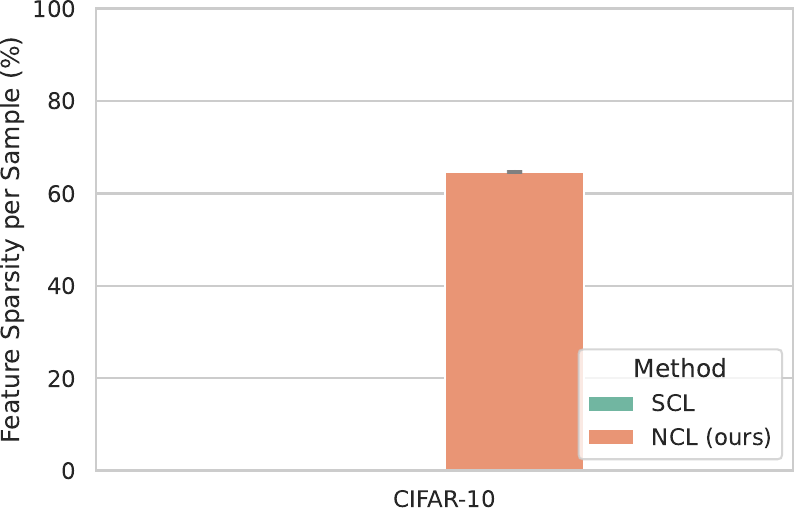}
    \caption{Feature Sparsity}
    \label{fig:sample-sparisty-scl}
    \end{subfigure}
    \hspace{0.05in}
    \begin{subfigure}{0.3\textwidth}
    \includegraphics[width=\linewidth]{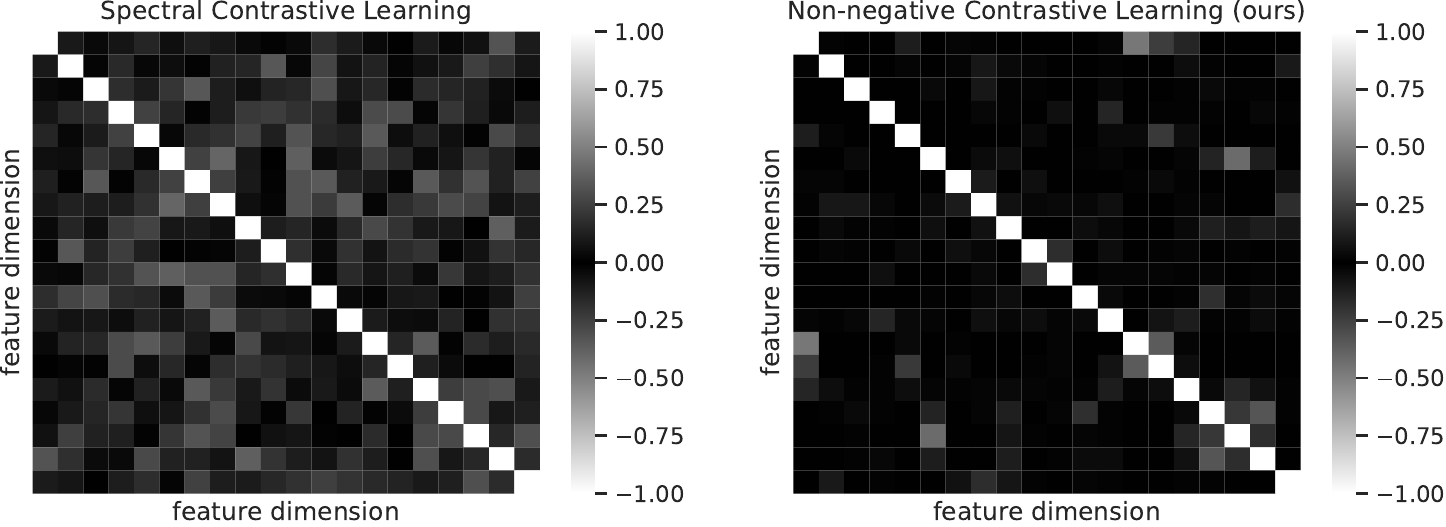}
    \caption{Feature Correlation}
    \label{fig:dimensional-correlation-scl}
    \end{subfigure}
    \caption{Comparison between spectral contrastive learning (SCL) and non-negative contrastive learning (NCL) on CIFAR-10: a) linear probing accuracy; b) class consistency rate, measuring the proportion of activated samples that belong to their most frequent class along each feature dimension; c) feature sparsity, the average proportion of zero elements ($|x|<1e^{-5}$) in the features of each test sample; d) dimensional correlation matrix $C$  of 20 random features: $\forall (i,j),C_{ij}=\E_x\tilde{f}_i(x)^\top\tilde{f}_j(x)$, where $\tilde f_i(x)=f_i(x)/\sqrt{\sum_x \left(f_i(x)\right)^2}$. 
    }
    \label{fig:enter-label-scl}
\end{figure}

In the main text, we mainly consider the widely adopted InfoNCE loss. Here, we provide the results with the spectral contrastive loss (SCL) that our analysis is originally based on.

\textbf{Setup.} During the pretraining process, we utilize ResNet-18 as the backbone and train the models on CIFAR-10. We respectively pretrain the model with Spectral Contrastive Learning (SCL) and NCL for 200 epochs. We pretrain the models with batch size 256 and weight decay 0.0001. When implementing NCL, we follow the default settings of SCL. During the linear evaluation, we train a classifier following the frozen backbone for 100 epochs. For the experiments that compare semantic consistency, feature sparsity and feature correlation, we follow the settings in Section \ref{sec:adv of NCL features}.

\textbf{Results.} As shown in Figure \ref{fig:enter-label-scl}, NCL can also attain slight improvement over SCL in linear probing (88.6\% \vs 88.0\%), while bring clear interpretability improvements in terms of class consistency (Figure \ref{fig:dimensional-clustering-scl}), sparsity (Figure \ref{fig:sample-sparisty-scl}) and correlation (Figure \ref{fig:dimensional-correlation-scl}). It shows that NCL indeed works well across different CL objectives.

\subsection{Comparison to Sparsity Regularization}
Another classic method to encourage feature sparsity is directly adding $\ell_1$ regularization on the output features, leading to the following regularized CL objective,
\begin{equation}
    \gL_{\rm CL} + \lambda\E_x\|f(x)\|_1,
\end{equation}
where $\lambda$ is a coefficient for regularization strength. We find that $\lambda=0.01$ is a good tradeoff between accuracy and sparsity on CIFAR-10.

\textbf{Results.} As shown in Table \ref{tab:sparsity-reg}, NCL outperforms $\ell_1$ regularized SCL on both classification accuracy (88.5\% \vs 87.2\%), which indicates that $\ell_1$ regularization hurts downstream performance while our non-negative constraint can help improve it. As for feature sparsity, we find that 69.654\% of NCL features are sparse while $\ell_1$ regularization only achieves 0.03\% sparsity.  Therefore, NCL is more effective than sparsity regularization in terms of both downstream performance and promoting feature sparsity.

\begin{table}[h]
\caption{Comparing NCL to $\ell_1$-based sparsity regularization on CIFAR-10.}
\label{tab:sparsity-reg}
\resizebox{\linewidth}{!}{
\begin{tabular}{lccc}
\toprule
            & Linear Accuracy (Encoder) (\%) & Linear Accuracy (Projector) (\%) & Feature Sparsity (\%) \\ \midrule
SCL         & 88.0            & 85.4                            & 0.00            \\
SCL + $\ell_1$ reg & 87.2            & 84.1                            & 0.03           \\
\textbf{NCL}         & \textbf{88.6}            & \textbf{86.2}                            & \textbf{69.65}           \\
\bottomrule
\end{tabular}
}
\end{table}

\section{Empirical Verifications of Theoretical Analysis on NCL}

\subsection{Empirical Verifications of Optimal Representations}
To further verify the theoretical analysis of the optimal solutions in Section \ref{sec:theory}, we further investigate properties of the learned representations on real-world datasets, \eg CIFAR-10. As we have no knowledge of the latent classes of CIFAR-10, we consider a simplified case, where we assume that the 10 classes in CIFAR-10 are latent classes, and generate positive samples by randomly sampling two augmented samples from the same class, in other words, supervised contrastive learning (SupCon) \citep{khosla2020supervised}. Then we train a ResNet-18 with CL and NCL objectives for 200 epochs, respectively. 

By observing the largest 50 expected activation (EA) values of learned representations (Figure \ref{fig:50 ta}), we find that the features of NCL are mostly activated in the first 10 dimensions, which is highly consistent with the ground-truth number of (latent) class in this setting with $C=10$. In comparison, CL features do not differ much from one feature to another, showing that they do have good dimensional feature interpretability. Also, the eigenvalue distribution in Figure \ref{fig:50 eigen} reveals that CL features collapse even quicker than NCL features, and degrade to small values even before 10 dimensions, while the eigenvalues of NCL features only degrade quickly after the first 10 dimensions. Therefore, we believe that the learned NCL features do lie closer to the ideal ground truth.

\begin{figure}[h]
    \centering
    \begin{subfigure}{.4\textwidth}
     \includegraphics[width=\linewidth]{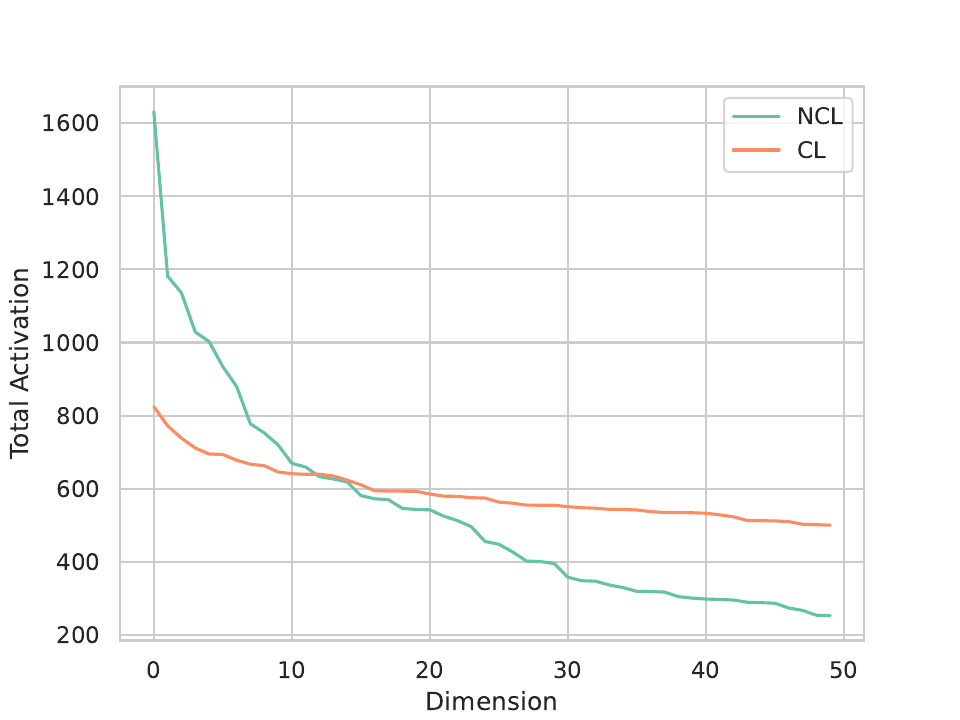}   
     \caption{Distribution of the 50 largest EA values}
     \label{fig:50 ta}     
    \end{subfigure}
    \begin{subfigure}{.4\textwidth}
     \includegraphics[width=\linewidth]{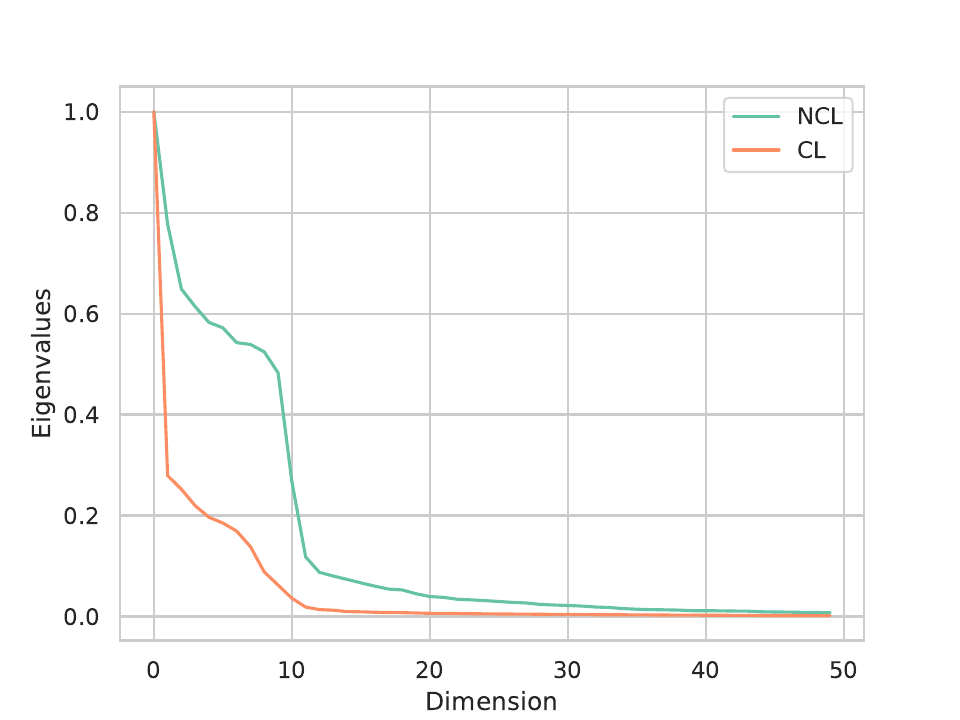}   
     \caption{Distribution of the 50 largest eigenvalues}
     \label{fig:50 eigen}     
    \end{subfigure}    
    \caption{Analysis of the learned features of CL and NCL on CIFAR-10.}
    \label{fig:eigenvalues of supcon}
\end{figure}

\subsection{Empirical Verifications of Assumptions in Theorem \ref{theorem:unique}}

To verify the assumptions we used in Theorem \ref{theorem:unique} (for each latent class, there exists at least one sample in the datasets only has that feature activated), we conduct experiments to observe the activated dimensions of different samples. As shown in Figure \ref{fig:activated-dimensions}, samples generally have much fewer activated dimensions under NCL: around 70\% examples have less than 20 activated dimensions, and in terms of the smallest, 16 examples are only activated on 2 dimensions. Considering that the noisy training in practice can lead to some redundant features and some gaps to the optimal solution, we believe that our assumption is indeed approximately true.

\begin{figure}[h]
    \centering
    \includegraphics[width = .4\textwidth]{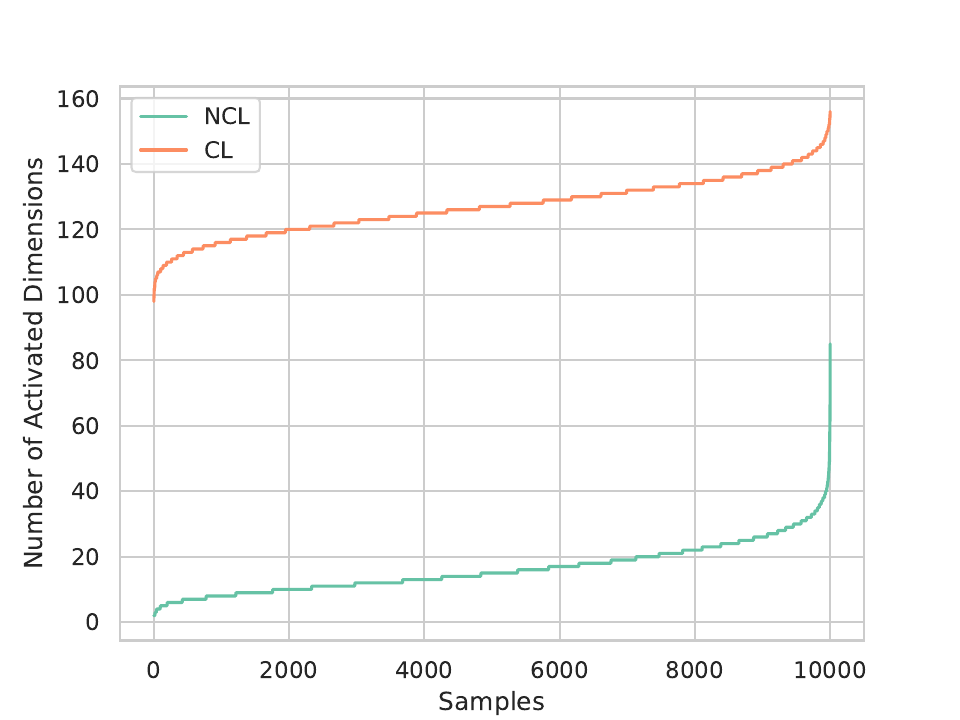}
    \caption{The number of activated dimensions of the representations learned by CL and NCL on CIFAR-10.}
    \label{fig:activated-dimensions}
\end{figure}

\section{Addtional Results of Feature Properties in NCL}

\subsection{Disentanglement Evaluation on Synthetic Data}
In the main paper, we evaluate feature disentanglement of CL and NCL on real-world datasets, which are difficult without knowledge of ground truth. Here, following the common practice of disentanglement literature, we also evaluate disentanglement on Dsprites, a toy dataset where we are aware of the ground-truth latents \citep{abdiDisentanglementPytorch}. To be specific, we respectively train the models with original contrastive loss and NCL loss on Dsprites following the settings of \citet{abdiDisentanglementPytorch}. Then we evaluate the disentanglement by the widely adopted disentanglement metric Mutual Information Gap (MIG) \citep{chen2018isolating}. 

As shown in Table \ref{tab:mig dsprites}, NCL features also show much better disentanglement than CL on Dsprites as measured by MIG, which further verifies the advantages of NCL.
Since Dspirtes is a toy model that is very different from real-world images, the default SimCLR augmentation could be suboptimal to obtain good positive samples. We leave more finetuning of NCL on these toy datasets for future work. 

\begin{table}[h]
\centering
\caption{Comparing disentanglement between CL and NCL on Dsprites.}
\begin{tabular}{@{}lc@{}}
\toprule
Method    & MIG    \\ \midrule
CL & 0.037 \\
\textbf{NCL}& \textbf{0.065} \\ \bottomrule
\end{tabular}
\label{tab:mig dsprites}
\end{table}

\subsection{More Interpretability Visualization}
Besides the CIFAR-10 examples in  Figure \ref{fig:example}, we perform the interoperability visualization experiments on CIFAR-100 (Figure \ref{fig:example-c100}) and ImageNet-100 (Figure \ref{fig:example-i100}). We can still observe a clear distinction in semantic consistency: activated samples along each feature dimension often come from different classes in CL while those in NCL mostly belong to the same class. It verifies that NCL has clear advantages over CL in feature interpretability on multiple datasets.

\begin{figure}[h]
    \centering
    \begin{subfigure}{.4\textwidth}
     \includegraphics[width=\linewidth]{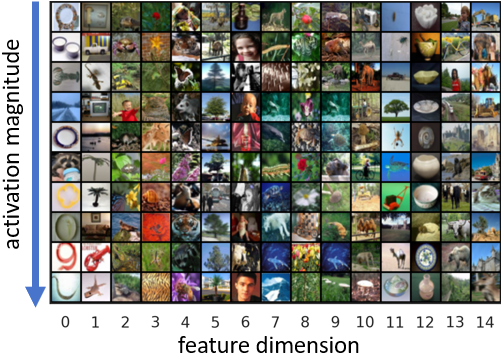}   
     \caption{Contrative Learning }
     \label{fig:sample-cl-c100}     
    \end{subfigure}
    \quad 
    \begin{subfigure}{.4\textwidth}
     \includegraphics[width=\linewidth]{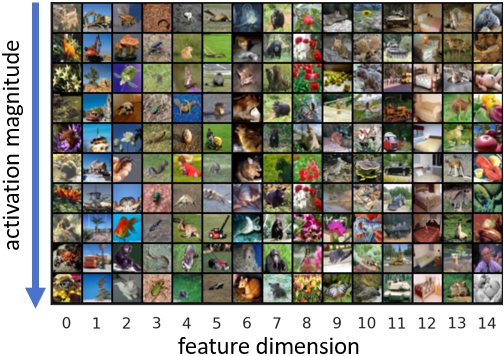}   
     \caption{Non-negative Contrative Learning (ours)}
     \label{fig:sample-ncl-c100}     
    \end{subfigure}
    \caption{Visualization of CIFAR-100 test samples with the largest values along each feature dimension (sorted according to activation values, see Appendix \ref{sec:details}). 
    }
    \label{fig:example-c100}
\end{figure}

\begin{figure}[h]
    \centering
    \begin{subfigure}{.4\textwidth}
     \includegraphics[width=\linewidth]{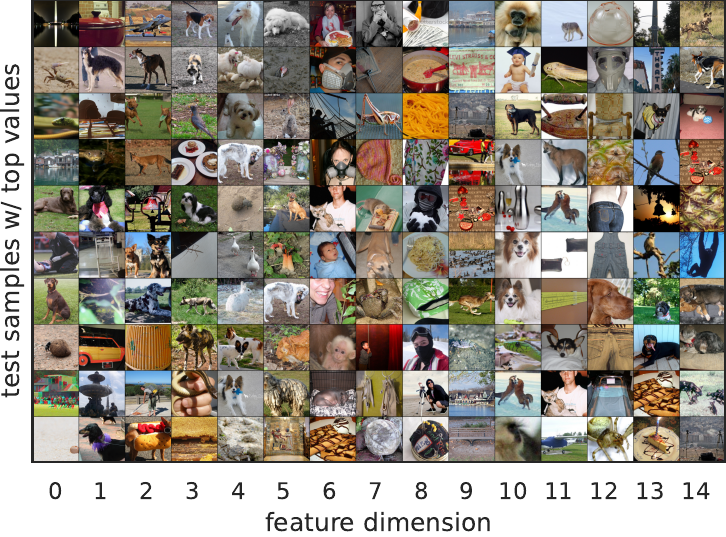}   
     \caption{Contrative Learning }
     \label{fig:sample-cl-imagenet100}     
    \end{subfigure}
    \quad 
    \begin{subfigure}{.41\textwidth}
     \includegraphics[width=\linewidth]{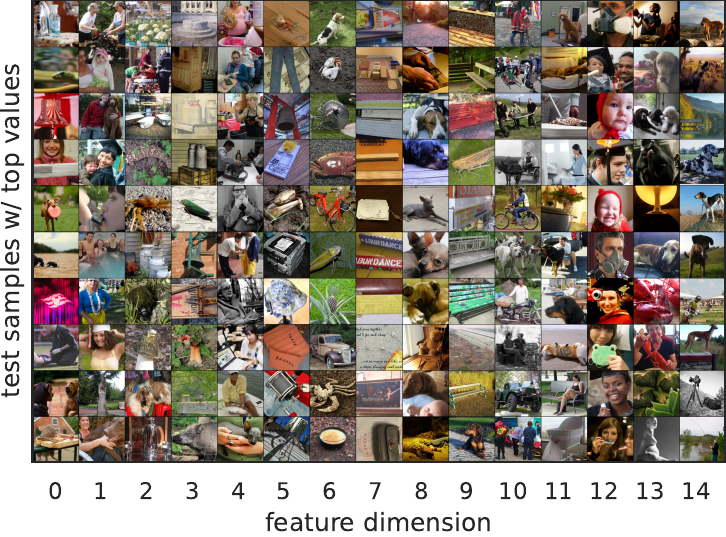}   
     \caption{Non-negative Contrative Learning (ours)}
     \label{fig:sample-ncl-imagenet100}   
    \end{subfigure}
    \caption{Visualization of ImageNet-100 test samples with the largest values along each feature dimension (sorted according to activation values, see Appendix \ref{sec:details}). 
    }
    \label{fig:example-i100}
\end{figure}

\subsection{Performance under Dynamic Feature Length}
 To further investigate the benefit of sparsity in the representations learned by NCL, we conduct additional experiments to observe the performance of expected activation (EA) with different numbers of selected dimensions. As shown in Figure \ref{fig:select-dimensions}, EA-based NCL features keep a high level of mAP and have almost no degradation of performance. Instead, CL features keep degrading under fewer features and underperform NCL a lot at last. This experiment shows the clear advantage of the sparsity and disentanglement properties of NCL.

\begin{figure}[h]
    \centering
    \includegraphics[width = .6\textwidth]{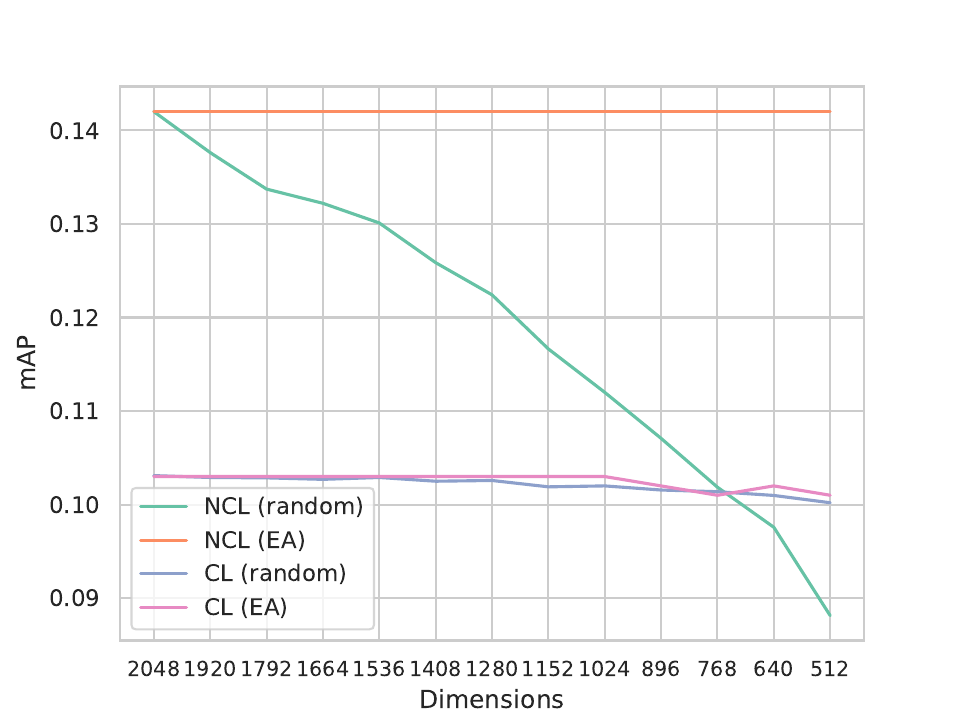}
    \caption{Image retrieval with CL and NCL features and two feature selection criteria (random, EA) on ImageNet-100.}
    \label{fig:select-dimensions}
\end{figure}

\subsection{Downstream Performance of Projector Output}
\label{app:projector}

Following the common practice \citep{simclr,moco}, in the main paper, feature transferability is by default evaluated based on the encoder features (before projector) to attain better downstream performance. 
For completeness, we also evaluate the features of projector outputs on CIFAR-10 and CIFAR-100. As shown in Table \ref{tab:projector}, we can see that NCL (w/ ReLU) output can achieve comparable performance to CL outputs (w/o ReLU) while providing much better feature interpretability as shown in Figure \ref{fig:interpretability-comparison}. Besides, Tables \ref{tab:in-distribution} \& \ref{tab:ood} shows that when pursuing interpretability, NCL can also induce better encoder features than CL. Overall, we believe that NCL is indeed favorable since it attains better feature interpretability while maintaining feature usefulness.

\begin{table}[h]\centering
\caption{Downstream classification performance (accuracy in percentage) with projector outputs.}
\label{tab:projector}
\begin{tabular}{lcccc}
\toprule
\multirow{2}{*}{Method}               & \multicolumn{2}{c}{CIFAR-10}  & \multicolumn{2}{c}{CIFAR-100}            \\
               & LP           & FT           & LP           & FT           \\ \midrule
CL (w/o ReLU)  & 85.6$\pm$0.1 & 92.1$\pm$0.1 & 53.6$\pm$0.3 & 71.4$\pm$0.2 \\
NCL (w/o ReLU) & 85.4$\pm$0.3 & 92.4$\pm$0.1 & 53.7$\pm$0.2 & 71.5$\pm$0.2 \\ \bottomrule
\end{tabular}
\end{table}

\section{Details of Experiment Setup}
\label{sec:details}

\subsection{Implementation and Evaluation of Non-negative Features}

In a standard contrastive learning method like SimCLR, the encoder is composed of two subsequential components: the backbone network (\eg a ResNet) $g$, and the projector network (usually an MLP network) $p$. The projector output $f(x)=p(g(x))$ is then used to compute the InfoNCE, using the cosine similarity between positive and negative features. We apply the reparameterization at the final output of the projector output, \ie $f_+(x)=\sigma_+(f(x))$. Then, the non-negative feature is fed into computing the InfoNCE loss. All hyperparameters stay the same as the default ones. 

\textbf{Evaluation.} Since non-negative constraints are only imposed on the projector output, we regard the projector outputs as the learned features in the visualization, feature selection, and feature disentanglement experiments, which aligns well with our theoretical analysis. When evaluating the feature transferability to downstream classification, following the convention, we also discard the projector and use the backbone network output for linear probing and fine-tuning.

\subsection{Experiment Details of Visualization Results and Pilot Study}

In all experiments in this part, we adopt SimCLR as the default backbone method (with ResNet-18), train the models for 200 epochs, and collect results on CIFAR-10 test data.
In Figure \ref{fig:example}, we sort feature dimensions according to expected activation (taking absolute values for contrastive learning features), and select the top samples with the largest activation in each dimension. In other results, the feature dimensions and samples are all randomly selected (without cherry-picking). Dimensions that have zero activation on all test samples (\ie dead features) are excluded.

\subsection{Experiment Details of Feature Disentanglement}
\label{app:disentanglement}

In practice, for the ease of computation, we first decompose the SEPIN metric based on the definition of the mutual information:
\begin{equation}
\begin{aligned}
    I(x,f_{r_i}(x)|f_{\neq r_i}(x)) &= I(x,(f_{r_i}(x),f_{\neq r_i}(x))) - I(x, f_{\neq r_i}(x))\\
    &=I(x,f(x)) - I(x, f_{\neq r_i}(x))
\end{aligned}
\end{equation}
In the next step, we calculate two terms respectively. As it is quite hard to calculate the mutual information directly, we use a tractable lower bound: the InfoNCE \citep{InfoNCE} to estimate that. To be specific, for the mutual information $I(x,f(x))$, we estimate it with $\gL_{\rm NCE}(f)$, \ie
\revise{
\begin{equation}
    I(x,f(x))\approx \gL_{\rm NCE}(f)=-\E_{x,x^+}\log\frac{\exp(f(x)^\top f(x^+))}{\exp(f(x)^\top f(x_+))+\frac{1}{M}\sum_{i=1}^M\exp(f(x)^\top f(x_i^-)},
\end{equation}
where $f$ is the learned encoder. Similarly, for the mutual information $I(x,f_{\neq i}(x))$, we obtain:
\begin{equation}
    I(x,f_{\neq i}(x))\approx \gL_{\rm NCE}(f_{\neq i})=-\E_{x,x^+}\log\frac{\exp(f_{\neq i}(x)^\top f_{\neq i}(x^+))}{\exp(f_{\neq i}(x)^\top f_{\neq i}(x^+))+\frac{1}{M}\sum_{i=1}^M\exp(f_{\neq i}(x)^\top f_{\neq i}(x_i^-)}.
\end{equation}
}
Then we respectively calculate two terms on the test data of ImageNet-100 with the encoders learned by NCL and CL. Following the definition of SEPIN@$k$, we sort the dimensions based on the values of SEPIN, and SEPIN@$k$ calculates the mean values of $k$ largest SEPIN values of learned representations.

\subsection{Experiment Details of Feature Selection}
\label{app:feature selection}

During the pretraining process, we utilize ResNet-18 \citep{resnet} as the backbone and train the models on ImageNet-100 \citep{imagenet}. We follow the default settings of SimCLR \citep{simclr} and pretrain the model for 200 epochs. 

\textbf{Linear Probing on Selected Dimensions.}
During the evaluation process, we train a linear classifier following the frozen representations with the default settings of linear probing in SimCLR.
As shown in Table \ref{tab:selection}, the features of NCL selected with EA show significantly better linear accuracy, which verifies that the features of NCL are interpretable and EA values can discover the most important semantic dimensions related to the ground-truth labels.

\textbf{Image Retrieval on Selected Dimensions.}
 For each sample, we first encode it with the pretrained networks and then select dimensions from the features. Then we find 10 images that have the largest cosine similarity with the query image and calculate the mean average precision @ 10 (mAP@10) that returned images belong to the same class as the query ones. As shown in Table \ref{tab:selection}, we observe the dimensions selected by EA values of NCL can find the intra-class neighbors more accurately, which further shows that NCL can find the most important features that are useful in downstream tasks.

\textbf{Transfer Learning on Selected Dimensions.}
In the downstream transfer learning tasks, we train a linear classifier on stylized ImageNet-100 \citep{imagenetstylized} following the frozen representations with the default settings of linear probing. As shown in Table \ref{tab:selection},  the 512 dimensions with the largest EA values show superior performance, which implies that the estimated importance of NCL is robust and effective in downstream transfer learning tasks.

\subsection{Experiment Details of Downstream Classification}
\label{app:benchmark evaluation}
During the pretraining process, we utilize ResNet-18 \citep{resnet} as the backbone and train the models on CIFAR-10, CIFAR-100 and ImageNet-100 \citep{imagenet}. We pretrain the model for 200 epochs on CIFAR-10, CIFAR-100, and 100 epochs for ImageNet-100. We compare NCL and the original contrastive learning with the SimCLR \citep{simclr} backbone. For CIFAR-10 and CIFAR-100, the projector is a two-layer MLP with hidden dimension 2048 and output dimension 256. And for ImageNet-100, the projector is a two-layer MLP with hidden dimension 16384 and output dimension 2048.  We pretrain the models with batch size 256 and weight decay 0.0001. When implementing NCL, we follow the default settings of SimCLR. 

During the evaluation process, we consider three generalization tasks: in-distribution linear evaluation, in-distribution finetuning and out-of-distribution generalization. During the linear evaluation, we train a classifier following the frozen backbone pretrained by different methods for 50 epochs. For the in-distribution finetuning, we train both the backbone and the classifier for 30 epochs. And for the out-of-distribution generalization, we use the linear classifier obtained with the linear evaluation process on original ImageNet-100. Then we evaluate the linear accuracy on stylized ImageNet \citep{imagenetstylized}, ImageNet-Sketch \citep{imagenetsketch}, ImageNet-Corrpution \citep{imagenetc}. As we pretrain the network on ImageNet-100, we select the samples of the corresponding 100 classes from these out-of-distribution datasets and evaluate the accuracy.

\subsection{Experiment Details of Supervised Learning}
\label{app:supervised}
For the fine-tuning tasks, we first train a model by SimCLR. During the pretraining process, we utilize ResNet-18 \citep{resnet} as the backbone and train the models on ImageNet-100 \citep{imagenet}. We pretrain the model for 200 epochs. We use a projector which is a two-layer MLP with hidden dimension 16384 and output dimension 2048. We pretrain the models with batch size 256 and weight decay 0.0001. During the fine-tuning process,  we train a classifier following the backbone for 50 epochs respectively with supervised and non-negative supervised learning and we follow the default settings of fine-tuning. When implementing the non-negative supervised learning, we select the ReLU function as the non-negative operator.

For the training from scratch, we also follow the default settings of fine-tuning. And we training the randomly initialized ResNet-18 with supervised and non-negative supervised learning for 100 epochs on ImageNet-100. When implementing the non-negative supervised learning, we select the ReLU function as the non-negative operator. It is worth noting that we also apply ReLU functions on the linear classifiers during the training process.

In both tasks, we respectively apply the CE and NCE loss on the encoders with and without the projectors. We follow the default settings of the projectors used in SimCLR, i.e., the projector is a two-layer MLP with hidden dimension 16384 and output dimension 2048.

}

\end{document}